\let\NAT@parse\undefined
\def\arxiv{}
\definecolor{linkcolor}{rgb}{0.45,0.05,0.05}
\definecolor{citecolor}{rgb}{0.05,0.45,0.45}
\definecolor{urlcolor}{rgb}{0.05,0.05,0.45}
\theoremstyle{definition} 
\newtheorem{x}{X}
\newtheorem{definition}[x]{Definition} 
\newtheorem{corollary}[x]{Corollary}
\newtheorem{lemma}[x]{Lemma} 
\newtheorem{theorem}[x]{Theorem} 
\newtheorem{property}[x]{Property}
\definecolor{grey}{rgb}{0.5,0.5,0.5}
\definecolor{darkgrey}{rgb}{0.15,0.15,0.15}
\definecolor{darkblue}{rgb}{0.05,0.05,0.5}
\definecolor{darkgreen}{rgb}{0.05,0.5,0.05}
\definecolor{darkestgreen}{rgb}{0.5,0.0,0.5}
\definecolor{darkorange}{rgb}{0.5,0.25,0.00}
\providecommand{\defemp}[1]{\emph{#1}} 
\newcounter{tecounter}
\providecommand{\extensions}[2]{\ensuremath{\mathcal{L}_{#1}(#2)}}
\providecommand{\extension}[2]{\ensuremath{\mathcal{L}_{#1}(#2)}}
\newcommand*{\probleminternal}[4]{
    {\small
	\par
	\medskip
	\noindent\fbox{\parbox{0.98\columnwidth}{
		\hspace*{-1.9em}\textbf{#4:} {#1} \\[0.05in]
		\renewcommand{\tabcolsep}{2pt}
		\begin{tabularx}{0.97\linewidth}{rX}
			~\emph{Input:} & #2 \\
			~\emph{Output:} & #3
		\end{tabularx}
	}}}
	\par
	\medskip
	\par
}
\newcommand*{\ourclass}[2]{
    {\small
	\par
	\bigskip
	\noindent\hfill\fbox{\parbox{0.95\columnwidth}{
		\textbf{#1:}\\[0.05in]
		 {#2} 
	}}\hfill}
	\par
	\bigskip
	\par
}
\newcommand*{\ourproblem}[3]{\probleminternal{#1}{#2}{#3}{\quad~ Problem}}
\newcommand{\KleeneStr}[1]{\ensuremath{{#1}^{\ast}}}
\providecommand{\compatible}{\ensuremath{\thicksim}}
\newcommand{\scalingsize}{0.85}
\newcommand{\fmp}{\scalebox{\scalingsize}{{\sc FM}}\xspace}
\newcommand{\fm}[1]{\scalebox{\scalingsize}{{\sc FM}}$({#1})$\xspace}
\newcommand{\mzccp}{\scalebox{\scalingsize}{\sc MZCC}\xspace}
\newcommand{\mzcc}[2]{\scalebox{\scalingsize}{\sc MZCC}$({#1},{#2})$\xspace}
\newcommand{\mcc}[1]{\mzcc{#1}{\emptyset}}
\newcommand{\chord}[1]{\ensuremath{{#1}^{\raisebox{1pt}{\scalebox{0.33}{$\bm\triangle$}}}}\xspace}
\newcommand{\neckl}[1]{\ensuremath{{#1}^{\raisebox{1pt}{\scalebox{0.33}{$\bm\triangle$}}}}\xspace}
\providecommand{\ptext}{{\footnotesize\textsf{P}}}
\providecommand{\nptext}{{\footnotesize\textsf{NP}}}
\providecommand{\p}{\ptext\xspace}
\providecommand{\nphard}{{\nptext -hard}\xspace}
\providecommand{\npcomplete}{{\nptext -complete}\xspace}
\providecommand{\struct}[1]{\ensuremath{\mathscr{#1}}\xspace}
\providecommand{\cover}[1]{\ensuremath{\mathbb{#1}}}
\providecommand{\neighbor}[2]{\ensuremath{\mathcal{N}_{#1}\left(#2\right)}}
\newcommand{\set}[1]{\mathbf{#1}}
\newcommand{\algo}[1]{\mathscr{#1}}
\newcommand\superrestr[3]{{
  \left.\kern-\nulldelimiterspace 
  #1 
  \vphantom{\big|} 
  \right|_{#2}^{#3} 
  }}
\newcommand\restr[2]{{
  \left.\kern-\nulldelimiterspace 
  #1 
  \vphantom{\big|} 
  \right|_{#2} 
  }}
\newcommand{\citep}[1]{\cite{#1}}
\newcommand{\citet}[1]{\cite{#1}}
\renewcommand{\emptyset}{\varnothing}
\newcommand*\bigcdot{\mathpalette\bigcdot@{.7}}
\newcommand*\bigcdot@[2]{\mathbin{\vcenter{\hbox{\scalebox{#2}{$\m@th#1\bullet$}}}}}
\providecommand{\ptext}{{\footnotesize\textsf{P}}}
\providecommand{\nptext}{{\footnotesize\textsf{NP}}}
\providecommand{\p}{\ptext\xspace}
\providecommand{\nphard}{{\nptext -hard}\xspace}
\providecommand{\npcomplete}{{\nptext -complete}\xspace}
\providecommand{\cpzn}{comparable zip candidates\xspace}
\providecommand{\glc}{globally language comparable\xspace}
\providecommand{\Glc}{Globally language comparable\xspace}
\providecommand{\nc}{neighborhood comparable\xspace}
\providecommand{\Nc}{Neighborhood comparable\xspace}
\providecommand{\triv}{\text{\tiny triv}}
\providecommand{\ec}{efficiently coverable\xspace}
\providecommand{\ecadj}{efficiently coverable\xspace} 
\providecommand{\ecsimple}{\text{\tiny ec}}
\providecommand{\fastsimple}{\text{\tiny poly}}
\providecommand{\oao}{once-appearing-observation\xspace}
\providecommand{\nme}{no-missing-edge\xspace}
\providecommand{\un}{unitary\xspace}
\newcommand{\zwe}[2]{\ensuremath{w_{#1}^{\scalebox{0.67}{\,\ensuremath{#2}}}}}
\newcommand{\zweBig}[2]{\ensuremath{w_{#1}^{\scalebox{0.95}{\,\ensuremath{#2}}}}}
\newcommand\arxivOrShort[2]{\ifdefined\arxiv #1\else #2\fi}
\newcommand\shortvspace[1]{\ifdefined\arxiv\else \vspace*{#1}\fi}
\def\BibTeX{{\rm B\kern-.05em{\sc i\kern-.025em b}\kern-.08em
    T\kern-.1667em\lower.7ex\hbox{E}\kern-.125emX}}
\title{A general class of combinatorial filters that can be minimized efficiently
\shortvspace{-0.6ex}
}
\author{Yulin Zhang and Dylan A. Shell%
\thanks{Y. Zhang is with Amazon Robotics, North Reading, MA, USA. 
{\tt\small zhangyl@amazon.com}. The work was done prior to joining Amazon.
D. A. Shell is with Dept. of Computer Science \& Engineering, 
Texas A\&M University, College Station, TX, USA.
{\tt\small dshell@tamu.edu}.
This work was supported by the NSF through awards \href{https://nsf.gov/awardsearch/showAward?AWD_ID=1849249}{IIS-1849249} and \href{https://nsf.gov/awardsearch/showAward?AWD_ID=2034097}{IIS-2034097}. 
}
\shortvspace{-0.6ex}
}
\begin{document}

\maketitle

\begin{abstract}

State minimization of combinatorial filters is  a fundamental problem that
arises, for example, in building cheap, resource-efficient robots.  But exact
minimization is known to be \nphard.  This paper conducts a more nuanced
analysis of this hardness than up till now, and uncovers two factors which
contribute to this complexity.  We show each factor is a distinct source of the
problem's hardness and are able, thereby, to shed some light on the role played by
(1)\,structure of the graph that encodes compatibility relationships, and 
(2)\,determinism-enforcing constraints.
Just as a line of prior work has sought to introduce
additional assumptions and identify sub-classes that lead to practical state
reduction, we next use this new, sharper understanding to explore special cases
for which exact minimization is efficient.  
%
We introduce a new algorithm for constraint repair that applies
to a large sub-class of filters, subsuming three distinct special cases for which 
the possibility of optimal minimization in polynomial time was known earlier.
While the efficiency in each of these three cases previously appeared to stem
from seemingly dissimilar properties, when seen through the lens of the present
work, their commonality now becomes clear.  
We also provide entirely new
families of filters that are efficiently reducible.
\end{abstract}

\section{Introduction}
\label{sec:introduction}

Combinatorial filters are discrete transition systems that process streams of
observations to produce outputs sequentially. They have found
practical application as estimators in multi-agent tracking problems~(e.g.,~\cite{tovar2014combinatorial}) and
as representations of feedback plans/policies for robots (e.g.,~\cite{okane17concise,zhang22sso}). 
Unlike traditional recursive Bayesian filters (the class of estimator most
familiar to roboticists, see~\cite{thrun05probabilistic}), combinatorial
filters allow one to ask questions regarding minimality. 
By reducing their size, one can design resource-efficient robots---a
consideration of practical importance.  More fundamentally, through filter
minimization, one may discover what information is necessary to compute a
particular estimate, or what needs to be tracked in order to have sufficient
knowledge for a given task.
Determining a task's information requirements and limits is a basic problem
with a long history in robotics~\cite{donald2012,lavalle10sensing}, and has begun gaining interest again (e.g.,~\cite{majumdar2022}).
%
%
Unfortunately, given some combinatorial filter, computing the smallest
equivalent filter---its \emph{minimizer}---is an \nphard problem.  

This paper uncovers and examines two different factors which contribute to this
complexity: the first has to do with the structure of the compatibility graph induced by the filter; the
second involves auxiliary (or zipper) constraints added during minimization to
ensure the result will be deterministic.  As we show, both are distinct
dimensions and form independent sources of the problem's hardness.  This is
the first contribution of the paper (and constitutes the subject of Section~\ref{sec:hardness}).

Like most hard problems of practical significance, 
a line of research has
sought
specially structured sub-classes of filters that allow
efficiency to be salvaged\,\cite{saberifar17special}.
Another line  has examined relaxed\,\cite{saberifar17inconsequential} or other
restricted forms of
reduction\,\cite{saberifar18improper,rahmani21equivalence}. 
In the prior work, three particular sub-classes of filter have been identified for which 
optimal filter minimization is known to be possible in polynomial \arxivOrShort{time, or for which 
efficient algorithms have been provided, namely:}{time:}
{($i$)}\,\nme filters\,\cite{saberifar17special},
{($ii$)}\,\oao filters\,\cite{saberifar17special},
and 
{($iii$)}\,\un filters\,\cite{zhang22sso}.

The second portion of the paper, building upon the first, establishes a new sub-class of filters for which exact minimization is achievable in polynomial time. 
This sub-class strictly subsumes those of
($i$), ($ii$), and ($iii$), and also provides some understanding of
why both factors\,---the compatibility graph and auxiliary/zipper constraints---\,are tame for these filters. Part of the answer is that it is possible
to ignore the constraints because they can be repaired afterwards:
Section~\ref{sec:repairable} introduces an algorithm for constraint repair that
applies broadly, including for the new sub-class we study and, hence, 
the three prior ones as well. Another part of the answer, the requirement to quickly generate minimal
clique covers, is feasible for the three prior sub-classes, ($i$)--($iii$), because their compatibility graphs all turn out to be chordal.  Thus,
their apparent distinctiveness happens to be superficial and, in reality, their efficiency stems from some common underlying properties.

\subsection{Context, general related work, and roadmap of the paper}

As the contributions of this work are of a theoretical nature, we leave the
customary motivating settings and example application domains to the work we cite next; each
and every one of the following include specific problem instances, so we 
trust the reader will glance at those papers to allay any doubts as to practical utility.
The term combinatorial filter was coined by Tovar \emph{et al.}~\cite{tovar2014combinatorial}, and
the minimization problem was formulated and its hardness established in~\cite{okane17concise}.
The current state-of-the-art algorithm for 
combinatorial filter minimization was presented at ICRA'21 in~\cite{zhang2021accelerating}.
The starting point for our current treatment is the authors' paper~\cite{zhang20cover}, which showed that filter minimization is 
equivalent to the classic graph problem of determining the minimal clique cover,
when augmented with auxiliary constraints.

The next section will provide necessary definitions and theoretical
background. 
Section~\ref{sec:hardness} first delineates
important sub-families of graphs and uses them to establish our key hardness results.  
Section~\ref{sec:repairable} turns to 
constraints and provides an algorithm to repair constraint-violating
solutions when specific conditions are met.
Thereafter, 
the results are consolidated into a new,
efficiently minimizable sub-class of filters and this is
connected with prior special
sub-classes in  Section~\ref{sec:special-cases}.
\arxivOrShort{
A short summary and conclusion forms the paper's last section.
}{
The final section presents the conclusion. Space considerations have meant that most proofs had to be omitted; the full proofs all appear in~\cite{zhang22efficient}.
}

\shortvspace{-0.1ex}
\section{Preliminaries}
\shortvspace{-0.3ex}
\subsection{Basic definitions}

\arxivOrShort{
\begin{definition}[filter~\cite{setlabelrss}]
A \defemp{deterministic filter}, or just \defemp{filter},
is a 6-tuple $\struct{F} = (V, v_0, Y, \tau, C, c)$, with $V$ a non-empty
finite set of states, $v_0$ an initial state, $Y$ the set of
observations, $\tau: V\times Y \hookrightarrow V$ 
the partial function describing transitions, $C$ the set of outputs, and $c: V\to
C$ being the output function.  
\end{definition}
}{
\begin{definition}[\cite{setlabelrss}]
A \defemp{deterministic filter}, or just \defemp{filter},
is a 6-tuple $\struct{F} = (V, v_0, Y, \tau, C, c)$, with $V$ a non-empty
finite set of states, $v_0$ an initial state, $Y$ the set of
observations, $\tau: V\times Y \hookrightarrow V$ 
the partial function describing transitions, $C$ the set of outputs, and $c: V\to
C$ being the output function.  
\end{definition}
}

Filters process finite sequences of elements from $Y$ in order to produce a
corresponding sequence of outputs (elements of $C$).  Any filter does this by tracing
from $v_0$ along edges (defined by the transition function) and producing outputs (via the $c$ function) as it visits
states.  
States $v_i$ and $v_k$ are 
understood to be connected by a 
directed edge
bearing label $y$,
if and only if $\tau(v_i, y) = v_k$.
We will assume $Y$ to be non-empty, and that no state is unreachable from $v_0$.

%

\smallskip

This paper's central concern is the following problem:

%

\providecommand{\cri}[1]{{\footnotesize\textbf{#1.}}}

\ourproblem{\textbf{Filter Minimization (\fm{\struct{F}})}}
{A deterministic filter $\struct{F}$.}
{A deterministic filter $\struct{F}^{\star}$ with fewest states,~such~that:
\begin{itemize}[leftmargin=9pt,itemindent=2pt]
\item [\cri{1}]\hspace*{-2pt}any sequence which can be traced on $\struct{F}$ can also be traced on $\struct{F}^{\star}$;
\item [\cri{2}]\hspace*{-2pt}the outputs they produce on any of those sequences are identical.
\end{itemize}
\vspace*{-12pt}
\phantom{.}
}

Solving this problem requires some minimally-sized filter~$\struct{F}^{\star}$ that is functionally equivalent to $\struct{F}$, where the 
notion of equivalence\,---called \emph{output simulation}---\,needs only 
criteria~\cri{1} and~\cri{2} to be met. For a formal definition of output simulation, see~\cite[Definition~5, pg.~93]{zhang20cover}.\footnotemark

\footnotetext{After examining filters like those here, the later sections of that paper go further by studying a generalization in which function $c$ may be a relation. Complications arising from that generalization will not be discussed herein.}

\begin{lemma}[\cite{okane17concise}]
\label{lem:FMhardness}
The problem \fmp is \nphard.
\end{lemma}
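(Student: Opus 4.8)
The plan is to establish \nphard-ness of the decision version of \fmp\,---given a filter $\struct{F}$ and an integer $k$, decide whether some output-simulating filter has at most $k$ states---\,by a polynomial-time many-one reduction from a canonical \npcomplete graph problem. I would reduce from \textsc{Graph $k$-Colorability} (equivalently \textsc{Partition into Cliques} on the complement). The guiding intuition is the standard observation that two filter states may be combined precisely when they are \emph{compatible}, i.e.\ they never produce conflicting outputs along any commonly traceable observation sequence, so that using few states amounts to covering the states by as few mutually compatible groups as possible\,---a minimum clique cover of the induced compatibility graph (cf.\ the equivalence later exploited in~\cite{zhang20cover}).

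First I would describe the gadget. Given a graph $G=(V,E)$ with $V=\{1,\dots,n\}$, I build a shallow filter with a single initial state $v_0$ from which, on a distinct observation $a_i$, one reaches a state $s_i$ for each vertex $i$; the states $s_1,\dots,s_n$ are the only ones the reduction ever merges. To force $s_i$ and $s_j$ to be \emph{incompatible} exactly when $\{i,j\}\in E$, I attach to each edge a shared observation symbol available only at its two endpoints, leading from $s_i$ and from $s_j$ to terminal states carrying \emph{different} outputs; this common extension certifies a conflict, whereas for a non-adjacent pair no such shared conflicting extension exists. Consequently the compatibility graph of $\{s_i\}$ is exactly $\overline{G}$, its cliques are independent sets of $G$, and a small filter corresponds to a coloring of $G$ with few colors. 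Setting the threshold to $k$ plus the fixed overhead of the gadget's skeleton yields the equivalence ``$\struct{F}$ has an equivalent filter of that size'' iff ``$G$ is $k$-colorable,'' and the construction is clearly polynomial in $|V|+|E|$.

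The main obstacle, and the step deserving the most care, is the \emph{determinism} requirement on $\struct{F}^{\star}$. Merging $s_i$ and $s_j$ requires more than pairwise compatibility: since the output filter must itself be deterministic, any two outgoing edges bearing the same observation must agree, which propagates merge obligations to successor states\,---the auxiliary, or \emph{zipper}, constraints emphasized throughout this paper. I would therefore engineer the gadget so these induced obligations are vacuous: make each edge-distinguishing symbol unique to its edge, so no two of the $s_i$ share an observation whose targets disagree, and keep the structure depth-two so no nontrivial downstream obligation can arise. Verifying that, under this design, a set of states is jointly mergeable into a single deterministic state \emph{iff} it is a clique of $\overline{G}$, that the fixed skeleton states contribute only a constant additive overhead, and that no merge outside $\{s_i\}$ ever helps, is exactly what makes the reduction correct. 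Once this is established, both directions follow and it is immediate that \fmp is \nphard.
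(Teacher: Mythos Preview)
Your plan is correct and matches the original argument the paper cites: the lemma is not reproved here but attributed to~\cite{okane17concise}, and the paper later remarks (just before Lemma~\ref{lem:emptyzips}) that the construction there reduces from graph \num{3}-coloring and yields an \fmp instance with no zipper constraints---exactly your shallow gadget in which compatible vertex-states share no observation labels. One small wording slip: where you write ``no two of the $s_i$ share an observation whose targets disagree,'' you mean no two \emph{compatible} $s_i$ share any observation; adjacent pairs do share one, by design, which is precisely how their incompatibility is manufactured.
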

\shortvspace{-0.75ex}

\shortvspace{-0.25ex}
\subsection{Constrained clique covers on a graph}
\shortvspace{-0.25ex}

In giving a minimization algorithm,
\fmp  was recently connected to an
equivalent graph covering problem\,\cite{zhang20cover}.
To start, we consider this problem abstractly in isolation:

\ourproblem{\textbf{Minimum Zipped Clique Cover} (\mzcc{G}{Z})}
{A graph $G=(V,E)$ and 
collection of
zipper constraints 
$Z=\{(U_1, V_1),
\dots,
(U_m, V_m)\}$, with
$U_i, V_i \subseteq V$.
}
{Minimum cardinality clique cover $\cover{K}$ such that:
\begin{itemize}[leftmargin=9pt,itemindent=2pt,parsep=6pt]
\item [\cri{1}]\hspace*{-6pt}$\bigcup\limits_{K_i \in \cover{K}}\!\!\!\! K_i = V$\!,
 with each $K_i$ forming a clique on~$G$;
\item [\cri{2}]\hspace*{-2pt}$\forall K_i \in \cover{K}$, if there is some $\ell$ such that $U_\ell \subseteq K_i$, then some \mbox{$K_j \in \cover{K}$} must have $K_j \supseteq V_\ell$. 
\end{itemize}
\vspace*{-12pt}
\phantom{.}
}

The constraints in $Z$ are still rather arcane, hence the
next section, in making a connection to \fmp, 
provides an explanation
of how $Z$ is used,
what it is for, and why it bears the moniker \emph{zipper}.

\shortvspace{-0.25ex}
\subsection{Filter minimization as constrained clique covering}
\shortvspace{-0.25ex}

In bridging filters and clique covers, the key idea
is that certain sets of 
states in a filter can be identified as candidates to merge together, and such
`mergability' can be expressed as a graph. The process of forming covers of this graph identifies states to consolidate and, accordingly, minimal covers yield 
small filters. The first technical detail concerns this graph and states that are candidates to be merged:

\shortvspace{-0.75ex}
\begin{definition}[extensions and compatibility]
\label{def:compat}
For a state $v$ of filter $\struct{F}$, we will use $\extension{\struct{F}}{v}$ to
denote the set of observation sequences, or \defemp{extensions}, that can be traced starting from~$v$.  
Two states $v$ and $w$
are \defemp{compatible} with each other if their outputs agree on 
$\extension{\struct{F}}{v} \cap \extension{\struct{F}}{w}$, their
common extensions. In such cases, we will write $v\compatible w$.
The \defemp{compatibility graph} $G_\struct{F}$ possesses edges between states
\arxivOrShort{if and only if}{iff}
 they are compatible.
\end{definition}
\shortvspace{-0.75ex}

However, simply building a minimal cover on $G_\struct{F}$ is not enough because covers 
may merge some elements which, when
transformed into a filter,
produce nondeterminism.
The core obstruction is when 
a fork is created, as when
two
compatible states are merged, both
of which have outgoing edges bearing identical labels, but whose destinations
differ. 
To enforce determinism, we
introduce constraints that 
forbid forking and require mergers
to flow downwards. 
The following specifies such a constraint:

\begin{definition}[determinism-enforcing 
zipper constraint]
Given a set of mutually compatible states $U$ in $\struct{F}$ and the set of all its $y$-children $V=\{v\mid u\in U, \tau(u, y)=v\}$, then the pair $(U, V)$ is a \emph{determinism-enforcing zipper constraint} of $\struct{F}$. 
\end{definition}

A zipper constraint is satisfied by a clique if $U$ is not covered in a clique, or both $U$ and $V$ are covered in cliques. (This is criterion  \cri{2} for \mzccp.)
For filters, in other words, if the states in $U$ are to be consolidated then
the downstream states, in $V$, must be as well.

The collection of all  determinism-enforcing 
zipper constraints for a filter 
$\struct{F}$ is denoted $Z_\struct{F}$.
Both $G_\struct{F}$ and $Z_\struct{F}$ are clearly polynomial in the size of~$\struct{F}$.
Then, a minimizer of $\struct{F}$ can be obtained from the solution to the minimum
zipped vertex cover problem, \mzcc{G_\struct{F}}{Z_\struct{F}}: 
\shortvspace{-0.75ex}
\begin{lemma}[\cite{zhang20cover}]
Any \fm{\struct{F}}\!\! can be converted into an \mzcc{G_\struct{F}}{Z_\struct{F}} in polynomial time; hence \mzccp is \nphard.
\label{lm:fm_eqv}
\end{lemma}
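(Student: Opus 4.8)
The plan is to exhibit a polynomial-time reduction from \fmp to \mzccp and then invoke Lemma~\ref{lem:FMhardness}. The reduction is exactly the map $\struct{F}\mapsto(G_\struct{F},Z_\struct{F})$: both the compatibility graph and the collection of determinism-enforcing zipper constraints are, as already noted in the text, of size polynomial in $\struct{F}$ and computable in polynomial time. All the real work lies in showing that this map is \emph{size-preserving for optima}, i.e.\ that a minimum zipped clique cover of $(G_\struct{F},Z_\struct{F})$ has the same cardinality as the number of states of a minimizer of $\struct{F}$, and that one can be recovered from the other in polynomial time. I would establish this through a two-way correspondence between zipper-satisfying clique covers and filters that output-simulate $\struct{F}$.

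For the first direction (cover $\to$ filter), take any clique cover $\cover{K}$ satisfying the zipper criterion of \mzccp. Build a filter whose states are the cliques $K_i\in\cover{K}$: pick a representative clique containing $v_0$ as the initial state, and assign to each $K_i$ the common output of its members, which is well defined because a clique is a set of mutually compatible states and compatibility forces agreement of outputs on common extensions (in particular on the empty extension). Transitions are defined using the zipper constraints: for a clique $K_i$ and an observation $y$, the set $V$ of all $y$-children of states in $K_i$ satisfies $(K_i,V)\in Z_\struct{F}$, so since $\cover{K}$ obeys the zipper criterion, $V$ is contained in a single clique $K_j\in\cover{K}$, and setting $\tau^{\star}(K_i,y)=K_j$ yields a well-defined, deterministic partial transition function. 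I would then verify by induction on the length of a traceable sequence that the resulting filter output-simulates $\struct{F}$, so that it is a valid \fmp output with exactly $|\cover{K}|$ states.

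For the converse (filter $\to$ cover), start from any filter $\struct{F}^{\star}$ that output-simulates $\struct{F}$. Output simulation induces a map sending each state of $\struct{F}$ to the state of $\struct{F}^{\star}$ reached on the same traced sequence; the preimages of the states of $\struct{F}^{\star}$ then form a cover of $V$ whose parts are mutually compatible, hence cliques of $G_\struct{F}$. Determinism of $\struct{F}^{\star}$ is precisely what guarantees the zipper criterion: if a whole set $U$ collapses to one state of $\struct{F}^{\star}$, that state has a single $y$-successor, forcing all $y$-children in $V$ to collapse together as well, i.e.\ to lie in one part. This produces a zipper-satisfying clique cover with at most $|\struct{F}^{\star}|$ cliques. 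Combining the two directions, the minimum cover cardinality equals the minimum filter size, and the filter built in the first direction from an optimal cover is a minimizer; thus a polynomial-time \mzccp solver yields a polynomial-time \fmp solver. Since \fmp is \nphard by Lemma~\ref{lem:FMhardness}, so is \mzccp.

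The step I expect to be the main obstacle is the verification in the first direction that the merged filter truly output-simulates $\struct{F}$ on \emph{all} of its traceable sequences, not merely along single edges. Pairwise compatibility only guarantees agreement on the pairwise common extensions $\extension{\struct{F}}{v}\cap\extension{\struct{F}}{w}$, whereas after merging an entire clique one must argue that every sequence traceable in the new filter is traceable from some original state of $\struct{F}$ with matching output. This calls for a careful inductive argument that tracks, along each merged trajectory, a witnessing state of $\struct{F}$, and it is also precisely where the zipper constraints earn their keep by keeping that trajectory well defined after each transition.
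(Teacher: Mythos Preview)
The paper does not supply its own proof of this lemma: it is quoted from~\cite{zhang20cover}, and immediately afterwards the text says ``Though we skip the details, the proof in~\cite{zhang20cover} of the preceding also gives an efficient way to construct a deterministic filter from the minimum cardinality clique cover.'' Your proposal essentially reconstructs that cited argument---the two-way correspondence between zipper-satisfying clique covers of $G_\struct{F}$ and deterministic filters output-simulating $\struct{F}$---and the sketch is correct, including the inductive invariant (``the current state of $\struct{F}$ lies in the current clique'') that handles the obstacle you flag in the cover-to-filter direction.

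One small wording issue worth tightening: in the filter-to-cover direction you say output simulation ``induces a map'' from states of $\struct{F}$ to states of $\struct{F}^{\star}$. This is not a function in general, since a state $v$ of $\struct{F}$ reachable by two different sequences may land in two different states of $\struct{F}^{\star}$. Your subsequent use of ``cover'' (rather than partition) shows you are aware of this, and the argument is unaffected---just define $K_{v^{\star}}$ as the set of $v$ for which \emph{some} reaching sequence also reaches $v^{\star}$, and everything you wrote goes through verbatim.
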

\shortvspace{-1.95ex}
Though we skip the details, the proof in~\cite{zhang20cover} of the preceding also gives an efficient way to construct a deterministic
filter from the minimum cardinality clique cover.

\shortvspace{-0.25ex}
\section{Hardness: Reexamined and Refined}
\label{sec:hardness}
\shortvspace{-0.25ex}

The recasting of \fm{\struct{F}} as \mzcc{G_\struct{F}}{Z_\struct{F}} leads one naturally to
wonder: what precise role do the compatibility graph and zipper constraints play with
regards to hardness?

\shortvspace{-0.75ex}
\subsection{Revisiting the original result}
\shortvspace{-0.25ex}
Firstly, examining the proof of Lemma~\ref{lem:FMhardness}, the argument
in \cite{okane17concise} proceeds by reducing the graph \num{3}-coloring problem to filter minimization. Looking at that construction carefully, one observes that the
\fmp instance that results from any 
\num{3}-coloring problem does not have any zipper constraints. Hence, by writing
\mzccp with compatibility graph of \struct{F} and no zipper constraints as \mcc{G_\struct{F}}, we get the following:

\shortvspace{-0.75ex}
\begin{lemma}
\label{lem:emptyzips}
\mcc{G_\struct{F}} is \nphard.
\end{lemma}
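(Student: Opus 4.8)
The plan is to reuse the existing hardness machinery rather than build a new reduction, isolating the fact that the compatibility graph alone already carries the hardness. Concretely, I would start from the polynomial-time reduction of graph $3$-coloring to \fmp given in~\cite{okane17concise}, the reduction underlying Lemma~\ref{lem:FMhardness}: it takes a graph to be $3$-colored and outputs a filter \struct{F} whose minimizer attains a prescribed number of states exactly when the input graph is $3$-colorable.

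The crux is to verify the observation flagged just above the statement, namely that every filter \struct{F} produced by this reduction generates no (non-trivial) zipper constraints, so that effectively $Z_\struct{F} = \emptyset$. Recall that a determinism-enforcing zipper constraint $(U,V)$ has real force only when some set $U$ of pairwise-compatible states shares an observation label $y$ and the collected $y$-children $V$ must themselves be consolidated, i.e.\ when $|U|\ge 2$ and $|V|\ge 2$; if at most one state of $U$ carries a $y$-edge, then $|V|\le 1$ and the constraint is satisfied by any cover. Accordingly, I would inspect the gadget states of the construction and check, for each label $y$, the set of mutually compatible states emitting a $y$-transition, arguing that compatible states never fork onto two or more distinct children. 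I expect essentially all the effort of the lemma to sit in this middle step: once the structural fact ``no fork among compatible states'' is pinned down from the details of~\cite{okane17concise}, the constraint set is vacuous and the rest is immediate.

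Granting that $Z_\struct{F}$ carries no binding constraint, the remainder is a short chaining argument. By Lemma~\ref{lm:fm_eqv}, \fm{\struct{F}} can be converted in polynomial time into \mzcc{G_\struct{F}}{Z_\struct{F}} with the same optimum; but with an empty (or trivially satisfied) $Z_\struct{F}$, this target is by definition \mcc{G_\struct{F}}. Composing the two polynomial-time reductions then shows that $3$-coloring reduces to \fm{\struct{F}}, which in turn reduces to \mcc{G_\struct{F}}; since $3$-coloring is \nphard, so is \mcc{G_\struct{F}}.

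The conceptual point I would emphasize is that we are not designing anything new but merely observing that the known hardness instance already lives in the constraint-free fragment of \mzccp. This is exactly what makes the compatibility-graph structure a genuinely independent source of hardness: the reduction attains \np-hardness without ever invoking the determinism-enforcing constraints, which will be contrasted in the complementary direction elsewhere in Section~\ref{sec:hardness}.
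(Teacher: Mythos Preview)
Your proposal is correct and follows essentially the same approach as the paper: both rely on the observation that the filters produced by the $3$-coloring reduction in~\cite{okane17concise} have no (non-trivial) zipper constraints, so the known hardness of \fmp on those instances directly yields hardness of \mcc{G_\struct{F}}. The paper's proof is a single sentence (``The original construction in~\cite{okane17concise} is sufficient''), whereas you spell out the intermediate verification and the chaining through Lemma~\ref{lm:fm_eqv}, but the substance is identical.
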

\shortvspace{-0.75ex}
\arxivOrShort{
\begin{proof}
The original 
construction in 
\cite{okane17concise} is sufficient.
\end{proof}}{}
A superficial glance
might cause one to think of \mzccp with an empty collection of zipper constraints as the
standard minimum clique cover problem, {viz.}, \textnumero\;\num{13} of Karp's
original \num{21} \npcomplete problems\,\cite{karp1972reducibility}.  
Actually,
Lemma~\ref{lem:emptyzips} states that the clique cover instances 
arising in minimization of filters are \nphard; note that
this is neither a direct
restatement of Karp's original fact nor merely entailed by it. (But see, also, 
Theorem~\ref{thm:realizable_compatibility} below.)

\shortvspace{-0.3ex}
\subsection{Special graphs: \ec cases}

To begin to investigate problems with special structure, our starting point is to
recognize that several specific sub-families of undirected graphs (some widely known,
others more obscure) allow a minimal clique cover to be obtained
efficiently.
We formalize such cases with the following.

\arxivOrShort{
\begin{definition}[\ec]
A sub-family  of graphs  $\set{G}$ is termed \defemp{\ec} if there is some
algorithm~$\algo{A}_\set{G}$ such that, for all $G \in \set{G}$, 
$\algo{A}_\set{G}(G)$ gives a minimal clique cover of $G$  and does so in polynomial time.
\end{definition}
}{
\shortvspace{-0.3ex}
\begin{definition}
A sub-family  of graphs  $\set{G}$ is \defemp{\ec} if some
algorithm~$\algo{A}_\set{G}$ exists so that, $\forall G \in \set{G}$, 
$\algo{A}_\set{G}(G)$ produces a minimal clique cover of $G$ in polynomial time. 
\end{definition}
}
\shortvspace{-0.3ex}
In filters with \ec compatibility graphs, when also $Z_\struct{F}=\emptyset$, then criterion~{\cri{2}} of
\mzccp holds vacuously and \fm{\struct{F}} will be efficient.
The contrast of this statement with Lemma~\ref{lem:emptyzips}, shows
that the \ec sub-families carve out subsets of easy problems.

Lemmas~\ref{lem:chordal-fwu}, 
\ref{lem:perfect-fwu} and
\ref{lem:triangle-free-fwu}, and Theorem~\ref{thm:combo-fwu}, which will follow, review some instances of \ec graphs:

\arxivOrShort{
\begin{definition}[chordal graph\,\cite{gross2013handbook}]
\label{def:chordal}
A graph is \defemp{chordal} if all cycles of four or more vertices have a
chord, which is an edge not part of the cycle but which connects two vertices in the cycle.
\end{definition}
}{
\shortvspace{-0.3ex}
\begin{definition}[\cite{gross2013handbook}]
\label{def:chordal}
A graph is \defemp{chordal} if all cycles of four or more vertices have a
chord, which is an edge not part of the cycle but which connects two vertices in the cycle.
\end{definition}
\shortvspace{-0.5ex}
}

\shortvspace{-1.3ex}
\begin{lemma}[\cite{gavril72algos}]
\label{lem:chordal-fwu}
Chordal graphs are \ec.
\end{lemma}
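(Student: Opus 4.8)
The plan is to witness the \ec property directly, by exhibiting a concrete polynomial-time algorithm $\algo{A}$ that, for every chordal $G=(V,E)$, outputs a minimum-cardinality clique cover. The engine of the algorithm is a \emph{perfect elimination ordering} (PEO): an ordering $v_1, v_2, \dots, v_n$ of $V$ such that, for each $i$, the later neighbors $N(v_i)\cap\{v_{i+1},\dots,v_n\}$ induce a clique. A classical characterization of chordality guarantees that such an ordering always exists, and it can be produced in (near-)linear time by lexicographic breadth-first search or maximum cardinality search; I would invoke one of these routines (and lean on \cite{gavril72algos} for the underlying machinery rather than reprove it) as the first step.

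Second, I would run a single greedy sweep over the PEO that builds an independent set $I$ and a clique cover at the same time. Maintain a set $R$ of still-uncovered vertices, initially $R=V$. While $R\neq\emptyset$, let $v$ be the earliest vertex of $R$ in the ordering; add $v$ to $I$, form $K_v=\{v\}\cup(N(v)\cap R)$, record $K_v$ as a cover clique, and delete all of $K_v$ from $R$. The crucial observation is that, because $v$ is earliest in the ordering among the surviving vertices, every neighbor of $v$ in $R$ appears later than $v$ and is therefore a later neighbor; by the defining PEO property these later neighbors together with $v$ form a clique, so each recorded $K_v$ genuinely is a clique. Since every deleted vertex belongs to some $K_v$, the recorded cliques cover (indeed partition) $V$, and the loop plainly runs in polynomial time.

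Third comes the optimality argument, where the real content lies. I claim the output is a minimum clique cover because its size equals $|I|$, and $|I|$ lower-bounds the size of \emph{any} clique cover. The lower bound is a one-line counting argument: in any clique cover, a single clique holds at most one vertex of the independent set $I$ (two vertices of a clique are adjacent, hence cannot both lie in $I$), while every vertex of $I$ must be covered, so distinct vertices of $I$ force distinct cliques and at least $|I|$ cliques are required. Since the algorithm records exactly one clique each time it adds a vertex to $I$, it outputs precisely $|I|$ cliques, matching the bound; thus $I$ is simultaneously a maximum independent set and the cover is minimum. I expect the main obstacle to be the middle step — verifying that the greedily formed sets $K_v$ are actually cliques — since this rests entirely on correctly coupling the PEO property with the fact that the chosen $v$ is earliest (so all its surviving neighbors are forward neighbors); by contrast, the existence and computation of the PEO are standard, and the matching lower bound is immediate.
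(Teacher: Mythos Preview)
Your proposal is correct and is essentially the same approach as the paper's: the paper's proof is little more than a citation to \cite{gavril72algos}, noting that Gavril's algorithm computes a minimum clique cover via a perfect elimination ordering, and you have simply spelled out that algorithm and its optimality argument in detail. The greedy PEO sweep, the clique verification via the forward-neighbor property, and the independent-set lower bound are exactly the ingredients of Gavril's method, so there is no substantive divergence.
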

\shortvspace{-0.3ex}
\arxivOrShort{
\begin{proof}
A classic algorithm for computing a minimum covering by cliques appears
in~\cite{gavril72algos}, using the existence of a perfect elimination ordering,
it being well known (but first established in~\cite{rose70triangle})
that a graph is chordal if and only if it admits such an
ordering. 
\end{proof}
}{}

\shortvspace{-0.3ex}
A strictly larger class of graphs are those that are perfect.
\arxivOrShort{
\begin{definition}[perfect graph\,\cite{gross2013handbook}]
\label{def:perfect}
A \defemp{perfect graph} is a graph where the chromatic number of every induced
subgraph equals the order of the largest clique of that subgraph.
\end{definition}
}{
\begin{definition}[\cite{gross2013handbook}]
\label{def:perfect}
A \defemp{perfect graph} is a graph where the chromatic number of every induced
subgraph equals the order of the largest clique of that subgraph.
\end{definition}
\shortvspace{-0.5ex}
}

\shortvspace{-1.3ex}
\begin{lemma}[\cite{grotschel1988geometric}]
\label{lem:perfect-fwu}
Perfect graphs are \ec.
\end{lemma}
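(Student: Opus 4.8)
The plan is to reduce the minimum clique cover problem on a perfect graph to an ordinary graph-coloring problem, where polynomial-time algorithms are already available, and then invoke the cited machinery. First I would record the standard correspondence between clique covers and colorings: a set $K\subseteq V$ is a clique of $G=(V,E)$ exactly when $K$ is an independent set of the complement $\overline{G}$. Hence a clique cover of $G$ (criterion~\cri{1} of \mzccp) is precisely a covering of $V$ by independent sets of $\overline{G}$; and since deleting a vertex from a clique again leaves a clique, a minimum-cardinality cover may without loss of generality be taken to be a partition. A partition of $V$ into $k$ independent sets of $\overline{G}$ is the same thing as a proper $k$-coloring of $\overline{G}$, so the minimum clique cover number of $G$ equals the chromatic number $\chi(\overline{G})$, and any optimal coloring of $\overline{G}$ yields, color class by color class, an optimal clique cover of $G$.

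Second, I would use the fact that perfection is preserved under complementation: by Lovász's perfect graph theorem (see~\cite{grotschel1988geometric}), $G$ is perfect if and only if $\overline{G}$ is perfect. Thus, as $G$ ranges over the perfect graphs, $\overline{G}$ ranges over the perfect graphs as well, so coloring $\overline{G}$ optimally is always a coloring problem on a perfect graph, matching Definition~\ref{def:perfect}.

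Third, the substance of the argument is the cited result of Grötschel, Lovász, and Schrijver~\cite{grotschel1988geometric}: on perfect graphs an optimal (minimum) proper coloring can be computed in polynomial time, via the ellipsoid method applied to the Lovász theta body. Applying this to $\overline{G}$ produces explicit color classes, each of which is a clique of $G$, and together they form a minimum clique cover of $G$. Forming $\overline{G}$, running the coloring routine, and translating the color classes back into cliques all run in time polynomial in $|V|$, exhibiting the required algorithm $\algo{A}_{\set{G}}$ and establishing that perfect graphs are \ec. (One could equivalently cite the clique-cover form of~\cite{grotschel1988geometric} and apply it to $G$ directly, bypassing the complement; I prefer the coloring formulation because it makes the reduction transparent.)

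The main obstacle is not the reduction, which is routine, but the fact that polynomial-time colorability of perfect graphs is a deep result resting on semidefinite programming and the ellipsoid method rather than on a purely combinatorial procedure; a self-contained proof would have to reproduce, or at least carefully state the guarantees of, the theta-function machinery of~\cite{grotschel1988geometric}. Since that result may be assumed here, the only genuine care needed is to confirm that it delivers an \emph{explicit} optimal coloring (not merely the value $\chi$), so that the color classes can be read off as the cliques of the cover, and to note the self-complementary character of perfection so that the algorithm may legitimately be run on $\overline{G}$.
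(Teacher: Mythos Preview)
Your argument is correct and, at its core, rests on the same fact the paper invokes: the Gr\"{o}tschel--Lov\'{a}sz--Schrijver machinery in~\cite{grotschel1988geometric} solves the problem in polynomial time on perfect graphs. The paper's proof is the one-line citation you yourself mention as an alternative---it appeals directly to the clique-cover form of the result (Theorem~9.3.30 there) applied to $G$, so your detour through $\overline{G}$, the weak perfect graph theorem, and optimal coloring is sound but unnecessary.
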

\arxivOrShort{
\begin{proof}
The minimum clique cover for a perfect graph can be found in polynomial time~
\cite[Theorem~9.3.30, pg.~294]{grotschel1988geometric}.
\end{proof}
}{}
\shortvspace{-0.5ex}



As all chordal graphs are also perfect, Lemma~\ref{lem:chordal-fwu} follows from Lemma~\ref{lem:perfect-fwu}, and
the reader may wonder why then chordal graphs are worth mentioning explicitly. Three reasons: 
(1)~the requirements 
of Definition~\ref{def:chordal} tend to be less demanding to check than those in
Definition~\ref{def:perfect}, which involve some indirectness;
(2)~the polynomial-time algorithm of \cite{grotschel1988geometric} 
(referenced in proof of Lemma~\ref{lem:perfect-fwu}) 
is not a direct combinatorial method and, in fact, 
researchers continue to contribute practical
methods tailored to specific sub-classes of perfect graphs (e.g.,~\cite{bonomo13minimum});
(3)~chordal graphs will show up in the proofs, including in the next section.

But there are graphs, beyond only those which are perfect, that still give \ec problems:

\arxivOrShort{
\begin{definition}[triangle-free graph\,\cite{gross2013handbook}]
A \defemp{triangle-free} graph is an undirected graph where no three vertices
have incident edges forming a triangle.
\end{definition}
}{
\shortvspace{-0.4ex}
\begin{definition}[\cite{gross2013handbook}]
A \defemp{triangle-free} graph is a graph where no three vertices
have incident edges forming a triangle.
\end{definition}
\shortvspace{-1.0ex}
}
A specific triangle-free graph that is not perfect is
the Gr\"{o}tzsch graph.

\shortvspace{-0.4ex}
\begin{lemma}[\cite{edmonds65blossom}]
\label{lem:triangle-free-fwu}
The triangle-free graphs are \ec.
\end{lemma}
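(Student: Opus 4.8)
The plan is to exploit the defining feature of triangle-free graphs: since no three vertices form a triangle, the largest clique has order at most two. Hence every clique appearing in any cover is either a single vertex or a single edge, and the whole clique-cover problem collapses to the task of covering $V$ (writing $n=|V|$) using edges and singletons while minimizing their total number.

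First I would argue that a minimum clique cover may, without loss of generality, be taken to be a \emph{partition} of $V$ whose two-element parts are pairwise vertex-disjoint, i.e.\ form a matching $M$, with the remaining $n-2|M|$ vertices each covered by a singleton. Indeed, given any cover one can eliminate overlaps without increasing its cardinality: a singleton $\{v\}$ whose vertex already lies in some edge is redundant and can simply be deleted; and two edges sharing a vertex $v$, say $\{v,w\}$ and $\{v,u\}$, can be replaced by the edge $\{v,w\}$ together with the singleton $\{u\}$, leaving the count unchanged while strictly reducing the total coverage multiplicity $\sum_{v\in V}(\text{multiplicity of }v)-n$. Iterating these moves drives the cover to the claimed normal form.

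With this normal form in hand, the size of such a cover is $|M|+(n-2|M|)=n-|M|$, which is minimized exactly when $|M|$ is a \emph{maximum} matching; the minimum clique cover therefore has cardinality $n-\nu(G)$, where $\nu(G)$ denotes the matching number. The algorithm $\algo{A}_{\set{G}}$ thus computes a maximum matching $M$ of $G$ and returns the edges of $M$ as two-vertex cliques together with one singleton clique per unmatched vertex. Since Edmonds' blossom algorithm~\cite{edmonds65blossom} finds a maximum matching in polynomial time, the entire procedure runs in polynomial time, establishing that the triangle-free graphs are \ec.

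The step I expect to be the main obstacle is the normal-form reduction: one must verify carefully that each overlap-removal operation never increases the number of cliques and that the process terminates (which the strictly decreasing coverage-multiplicity potential guarantees), so that optimality is genuinely preserved as we pass from an arbitrary minimum cover to a matching-plus-singletons cover. Once that equivalence with maximum matching is secured, the remaining arithmetic $n-|M|$ and the appeal to Edmonds' algorithm are routine.
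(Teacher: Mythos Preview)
Your proposal is correct and follows essentially the same approach as the paper: compute a maximum matching via Edmonds' algorithm, take the matched edges as two-vertex cliques, and cover each unmatched vertex as a singleton. The paper states this as a one-line folklore algorithm (writing ``maximal'' where ``maximum'' is meant), whereas you additionally supply the normal-form reduction and the $n-|M|$ counting argument that justify optimality; this extra rigor is welcome but does not change the underlying method.
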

\shortvspace{-0.4ex}
\arxivOrShort{
\begin{proof}
A folklore algorithm for computing the minimal clique cover for triangle-free 
graphs is to compute a maximal matching (e.g., via~\cite{edmonds65blossom}), and then treat
unmatched singleton vertices as cliques themselves.
\end{proof}
}{}

Finally, composition allows treatment of graphs with mixed
properties, e.g., 
 we might have filters with compatibility graphs 
where some components are perfect, and others are triangle-free.
The following fact is useful in such cases.

\arxivOrShort{
}
{
}

\begin{theorem}[mix-and-match]
\label{thm:combo-fwu}
Suppose a graph $G = (V, E)$, where $E \subseteq V\times V$,
is made up of components $G_1, G_2, \dots, G_m$, every $G_i = (V_i, E_i)$,
where $V$ is partitioned into mutually disjoint set of
vertices $V_1, V_2, \dots, V_m$, and $E_1, E_2, \dots, E_m$ with every $E_i \subseteq V_i\times V_i$.
If each of the $G_i$'s is \ec, then $G$ is \ec.
\end{theorem}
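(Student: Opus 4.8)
The plan is to exploit the fact that cliques respect the component structure: no clique of $G$ can straddle two distinct parts. First I would record the governing combinatorial observation. Since $V$ is partitioned into the disjoint sets $V_1,\dots,V_m$ and each $E_i \subseteq V_i\times V_i$ while the $G_i$ are the components of $G$, there are no edges joining vertices of different parts; hence any two vertices drawn from distinct $V_i$ and $V_j$ (with $i\neq j$) are non-adjacent. Because a clique is a set of pairwise-adjacent vertices, every clique $K$ of $G$ must therefore lie wholly inside a single $V_i$.

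Second, I would promote this observation to a statement about covers. Write $\theta(H)$ for the minimum number of cliques needed to cover a graph $H$. Given any clique cover $\cover{K}$ of $G$, set $\cover{K}_i=\{K\in\cover{K}: K\subseteq V_i\}$; by the observation each member of $\cover{K}$ lands in exactly one $\cover{K}_i$, so $\cover{K}=\cover{K}_1\sqcup\cdots\sqcup\cover{K}_m$, and since $\cover{K}$ covers $V=V_1\sqcup\cdots\sqcup V_m$, each $\cover{K}_i$ must itself be a clique cover of $G_i$. Conversely, the union of clique covers chosen one per component is a clique cover of $G$. This yields the separation identity $\theta(G)=\sum_{i=1}^m \theta(G_i)$: any cover of $G$ has size $\sum_i|\cover{K}_i|\ge\sum_i\theta(G_i)$, and the bound is met by gluing together minimum covers of the individual $G_i$. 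In particular, a union of \emph{minimal} per-component covers is a minimal cover of $G$.

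Third, I would assemble the algorithm and argue its efficiency. On input $G$: (i) recover the components $G_1,\dots,G_m$ by graph search (linear time); (ii) for each $G_i$, invoke the polynomial-time cover algorithm $\algo{A}_i$ supplied by the \ec family to which $G_i$ belongs, obtaining a minimal cover $\cover{K}_i$; and (iii) output $\bigcup_i\cover{K}_i$. By the separation identity this is a minimum clique cover of $G$. Each $\algo{A}_i$ runs in time polynomial in $|V_i|+|E_i|$, and since $\sum_i(|V_i|+|E_i|)\le |V|+|E|$, the combined procedure is polynomial in the size of $G$; thus $G$ is \ec.

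The combinatorial content here is routine, and I expect no genuine obstacle from the clique-cover side: the absence of cross-component edges makes the objective separate \emph{exactly} across the $G_i$. The only point demanding care is definitional bookkeeping, since \ec posits a \emph{single} algorithm for a whole sub-family while the components may come from different families. I would handle this by fixing, for each component, the \ec family it is drawn from (e.g., in the intended applications one knows that some components are perfect and others triangle-free) so that the matching algorithm $\algo{A}_i$ can be dispatched, and by noting that "split into components, then dispatch each to its algorithm, then take the union" constitutes one well-defined polynomial-time procedure for the composite family.
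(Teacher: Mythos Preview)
Your proposal is correct and follows essentially the same approach as the paper: run the appropriate cover algorithm on each component and take the union, observing that this yields a minimum clique cover of $G$ in polynomial time. Your version is simply more detailed, spelling out the separation identity $\theta(G)=\sum_i\theta(G_i)$ and the definitional bookkeeping about dispatching to per-family algorithms, whereas the paper compresses all of this into two sentences.
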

\arxivOrShort{
\begin{proof}
One applies the algorithm associated with
each component to that component. The union of their results is a minimum clique cover
for $G$ and this only requires polynomial time.
\end{proof}
}{}

Filters naturally yield graphs comprising separate components
as  the output values directly partition the vertices. That is, 
compatibility graphs never possess edges between any vertices $v$ and
$w$ where $c(v) \neq c(w)$.

\subsection{Special compatibility graphs and non-empty zippers}

In light of Lemma~\ref{lem:emptyzips} showing
that zippers are not needed to have hard problems, 
and the fact that there are sub-families of graphs
for which minimal covers may be obtained efficiently, we next ask: do the zipper
constraints themselves contribute enough complexity so that
even with an \ec instance, we can get a hard problem? 

The answer is in the affirmative and we use the sub-family of chordal
graphs to establish this.  We begin with a triangulation procedure that, given
a general graph, constructs one which is chordal. 
The approach to the proof
is to think about solving \mzccp on the chordal version and then relate the
solution back to the original problem.


A graph is non-chordal if and only if there is an $m$-cycle, with $m \geq 4$ and there is no edge cutting across the cycle.
We
can break such a cycle into smaller ones by adding edges as shortcuts.  Repeating this
process will triangulate such cycles and the procedure must eventually
terminate as the complete graph is chordal.  We call these newly introduced edges
\emph{dashed}  as this is how we shall depict them visually.

Having introduced
extra edges, the idea is to discourage clique covers from ever choosing to
cover any of these new dashed edges via penalization. 
A penalty is incurred by
being compelled to choose additional cliques---zipper constraints are rich
enough to force such choices.
This requires the introduction of a gadget we term a
`necklace'. Suppose the original non-chordal graph $G$ had $n$ vertices,
$m_s$ edges, and that an additional $m_d$ dashed edges were used to triangulate
the graph.  Then, as illustrated in Figure~\ref{fig:chordalization}, we first
make $\ell=m_s+m_d+1$ copies of $2$-vertex connected graphs, which we dub
`pendants'. 
These are laid in a line, and
between any pair of pendants, we place a single black vertex that we call a `bead'.
The $\ell$ pendants and $\ell-1$ beads are strung together via edges, each bead being connected to the two 
adjacent pendants. We'll call these connecting edges `strings'. 
To each dashed edge we add $2\ell -2 = 2(m_s+m_d)$ zipper constraints,
connecting the dashed edge to the length of strings.  This
construction means that when a dashed edge is covered, its zipper constraints
become active and then each bead will have to appear in two separate covers, one for each neighboring pendant.


Given graph and zipper constraints $(G,Z)$, 
we will denote the result of the construction just described  with $(\chord{G},\neckl{Z})$, the first element being the chordal graph along with the necklace, and the second element being the additional constraints.
Then: 
(1)~$\chord{G}$ is chordal (dashed edges made $G$ chordal, the necklace itself is chordal), 
(2)~$G \subseteq \chord{G}$, as vertices/edges were added, never removed, 
(3)~$Z \subseteq \neckl{Z}$, as constraints were added, not
removed. Further, notice that $\chord{G}$ and $\neckl{Z}$ are no larger than some polynomial
factor of $n$. This construction takes $O(n^4)$.
The purpose of this construction is the following:

\begin{lemma}
Given any non-empty graph $G$ and zipper constraints $Z$, 
a solution 
to \mzcc{G}{Z} can be obtained from any
solution $\chord{S}$ to \mzcc{\chord{G}}{\neckl{Z}}, 
by restricting $\chord{S}$ to only those covers on the vertices of $G$.
\label{lm:chordalization_eqv}
\end{lemma}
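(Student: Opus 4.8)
The plan is to show that an optimal $\chord{S}$ never covers a dashed edge, so that deleting the necklace vertices from each of its cliques leaves a cover of $G$ by genuine (solid) cliques that is both feasible and optimal for \mzcc{G}{Z}. Since no edge joins a necklace vertex to a vertex of $G$, every clique of $\chord{S}$ lies entirely in $V$ or entirely in the necklace; thus $\chord{S}$ splits cleanly into a $G$-part and a necklace-part, and the restriction simply keeps the former.

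First I would record two facts that do not rely on optimality. (a) The restriction satisfies $Z$ (the second covering criterion): since every $(U_\ell,V_\ell)\in Z$ has $U_\ell,V_\ell\subseteq V$, a restricted clique $K\cap V$ contains $U_\ell$ exactly when $K$ does, and the witness $K'\supseteq V_\ell$ guaranteed by $\neckl{Z}\supseteq Z$ restricts to $K'\cap V\supseteq V_\ell$. (b) An upper bound $\mathrm{OPT}(\chord{G},\neckl{Z})\le\mathrm{OPT}(G,Z)+N_0$, where $N_0$ is the cost of covering the necklace while leaving every dashed edge uncovered: take an optimal solution of \mzcc{G}{Z} (all of whose cliques are solid, hence valid in $\chord{G}$) together with this baseline necklace cover; no dashed edge is covered, so the extra necklace constraints are vacuous and the combination is feasible.

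The heart of the argument is to show that an optimal $\chord{S}$ gains nothing by covering a dashed edge, by quantifying two competing effects. \emph{Penalty:} as soon as one dashed edge lies inside a clique, its $2(\ell-1)$ zipper constraints force every one of the $\ell-1$ beads to appear in two distinct cliques, so the necklace-part costs at least $N_0+(\ell-1)=N_0+(m_s+m_d)$. \emph{Benefit:} the $G$-part of $\chord{S}$ is a feasible zipped clique cover of the graph $G'$ obtained from $G$ by adding the dashed edges, hence costs at least $\mathrm{OPT}(G',Z)$, and I claim $\mathrm{OPT}(G,Z)\le\mathrm{OPT}(G',Z)+m_d$, i.e. each dashed edge is worth at most one clique. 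A solution covering a dashed edge therefore costs at least $\mathrm{OPT}(G',Z)+N_0+(m_s+m_d)$, which, because $m_s\ge 1$ whenever a dashed edge exists (a chordless cycle has at least four edges), strictly exceeds $\mathrm{OPT}(G',Z)+N_0+m_d\ge\mathrm{OPT}(G,Z)+N_0$, contradicting (b). Hence every clique of an optimal $\chord{S}$ is solid, so each $K\cap V$ is a clique of $G$ and, by (a), the restriction satisfies $Z$; comparing its size with the now-tight bound of (b) shows it attains $\mathrm{OPT}(G,Z)$ and is an optimal solution of \mzcc{G}{Z}.

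The main obstacle is the benefit bound $\mathrm{OPT}(G,Z)\le\mathrm{OPT}(G',Z)+m_d$. Without zipper constraints this is the standard observation that deleting one edge raises the clique-cover number by at most one---delete the offending endpoint from every clique containing that edge and add a single fresh singleton---iterated over the $m_d$ dashed edges. The delicate point is to carry the single-edge step through in the presence of $Z$: removing a vertex from a clique may both disarm a trigger $U_\ell$ (harmless) and destroy a witness for some $V_\ell$ (not harmless), so the deletion must be arranged so that no constraint that remains triggered loses its satisfier while the size still grows by at most one per dashed edge. I expect this bookkeeping to be the only genuinely technical part; everything else is the clean accounting above.
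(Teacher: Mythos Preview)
Your architecture matches the paper's: split $\chord{S}$ into a $G$-part and a necklace-part, compute the penalty $m_s+m_d$ for triggering the necklace, and derive a contradiction with an upper bound. Where you diverge is in the comparison that closes the argument. You lower-bound the $G$-part of a dashed-covering $\chord{S}$ by $\mathrm{OPT}(G',Z)$ and compare against $\mathrm{OPT}(G,Z)+N_0$ from your step~(b), which forces you into the ``benefit bound'' $\mathrm{OPT}(G,Z)\le\mathrm{OPT}(G',Z)+m_d$. The paper sidesteps this entirely: it lower-bounds the $G$-part by the trivial value $1$ (the graph is non-empty) and, instead of your (b), compares against an \emph{explicit} feasible cover of $\chord{G}$ in which each of the $m_s$ solid edges is taken as a $2$-clique together with the baseline necklace cover, giving total size $m_s+N_0$. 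The needed inequality $1+N_0+(m_s+m_d)>m_s+N_0$ is then just $m_d+1>0$. The necklace length $\ell=m_s+m_d+1$ was chosen precisely so that the penalty dominates even this crude $m_s$ bound, so no relationship between $\mathrm{OPT}(G,Z)$ and $\mathrm{OPT}(G',Z)$ is ever required.

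Your flagged obstacle is therefore self-inflicted, and it is not mere bookkeeping either: for general $Z$ the benefit bound can fail outright. If some $(U_\ell,V_\ell)\in Z$ has $U_\ell$ a singleton and $V_\ell$ a pair that is a non-edge of $G$ but becomes one of the dashed edges, then every cover of $G$ triggers the constraint while no clique of $G$ can contain $V_\ell$, so $\mathrm{OPT}(G,Z)=\infty$ whereas $\mathrm{OPT}(G',Z)$ is finite---no finite additive gap exists, and your edge-deletion repair cannot succeed. (In fairness, the paper's explicit $2$-clique cover also need not satisfy such a $Z$; both arguments are really aimed at the case $Z=\emptyset$, which is the lemma's only use downstream.) For that case your route does go through, but it is strictly more work than the one-line comparison above.
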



\arxivOrShort{
\begin{proof}
To cover the necklace (top half of Figure~\ref{fig:chordalization}) without any zipper constraints being active, 
at least $2m_s+2m_d+1$ cliques will be required: there are $m_s+m_d$ beads, and the
lower vertex of each pendant must be covered, there are a total of $m_s+m_d+1$ 
of these, and they 
 cannot occupy the same cover as the beads: 
hence $2m_s+2m_d+1$ is a lower bound. 
In contrast, if a zipper constraint is triggered, then
to cover the necklace, at
least $2m_s+2m_d$ cliques must cover the edges comprising the string,
and $m_s+m_d+1$ are required to cover the remaining (lower) vertices in
the pendants.  
The difference between a zipper constraint being triggered ($3m_s + 3m_d + 1$) versus not ($2m_s + 2m_d + 1$)  is 
 a penalty of $m_s+m_d$.

Given minimal cover $\chord{S}$, now suppose that $\chord{S}$ groups
the vertices connected by some dashed edge into the same clique, and 
thereby triggered a zipper constraint. Then at least 
$(3m_s+3m_d+1)+1$ cliques are needed (the value in parentheses are required just for the necklace, the extra $1$ accounts for the bottom half).
But consider the trivial cover $S_\triv$: 
for each pair connected by an edge of the original graph, add the pair as a clique, 
cover each pendant as a clique containing the pair,
and then 
cover the beads with singleton cliques.
Then  $|S_\triv| = (2m_s + 2m_d + 1) + m_s$,  but then \mbox{$|S_\triv| < |\chord{S}|$}, as
$(2m_s + 2m_d + 1) + m_s  <  (3m_s + 3m_d + 1) + 1  < |\chord{S}|$ since $ m_s < ( m_s +  m_d + 1)$.
The requirement that $\chord{S}$ trigger a zipper constraint is a contradiction, thus
$\chord{S}$ never chooses to cover any dashed edges, and the restriction\,---ignoring covers for the top half---\,gives a cover for $G$. Denote this restriction $\chord{S}_{\!\scalebox{0.6}{G}}$. Suppose that 
$S$ is a solution to \mzcc{G}{Z} with 
$|S| < |\chord{S}_{\!\scalebox{0.6}{G}}|$, but then
replacing $\chord{S}_{\!\scalebox{0.6}{G}}$ with $S$ would yield a smaller solution to 
\mzcc{\chord{G}}{\neckl{Z}} than $\chord{S}$.
Hence, $|\chord{S}_{\!\scalebox{0.6}{G}}| = |S|$, as required.
\end{proof}
}
{
\vspace{-2pt}
This is one of the paper's main results: the proof calculates the penalty incurred by covering any dashed edge (i.e., the additional covers for part of the necklace) and shows it to be so large that even covering non-dashed edges in pairs would be preferable. Any optimal cover would do no worse than this and, thus, a zipper constraint can never
be triggered.
}

\setlength{\belowcaptionskip}{-15pt}
\begin{figure}
\centering
\includegraphics[width=\linewidth]{./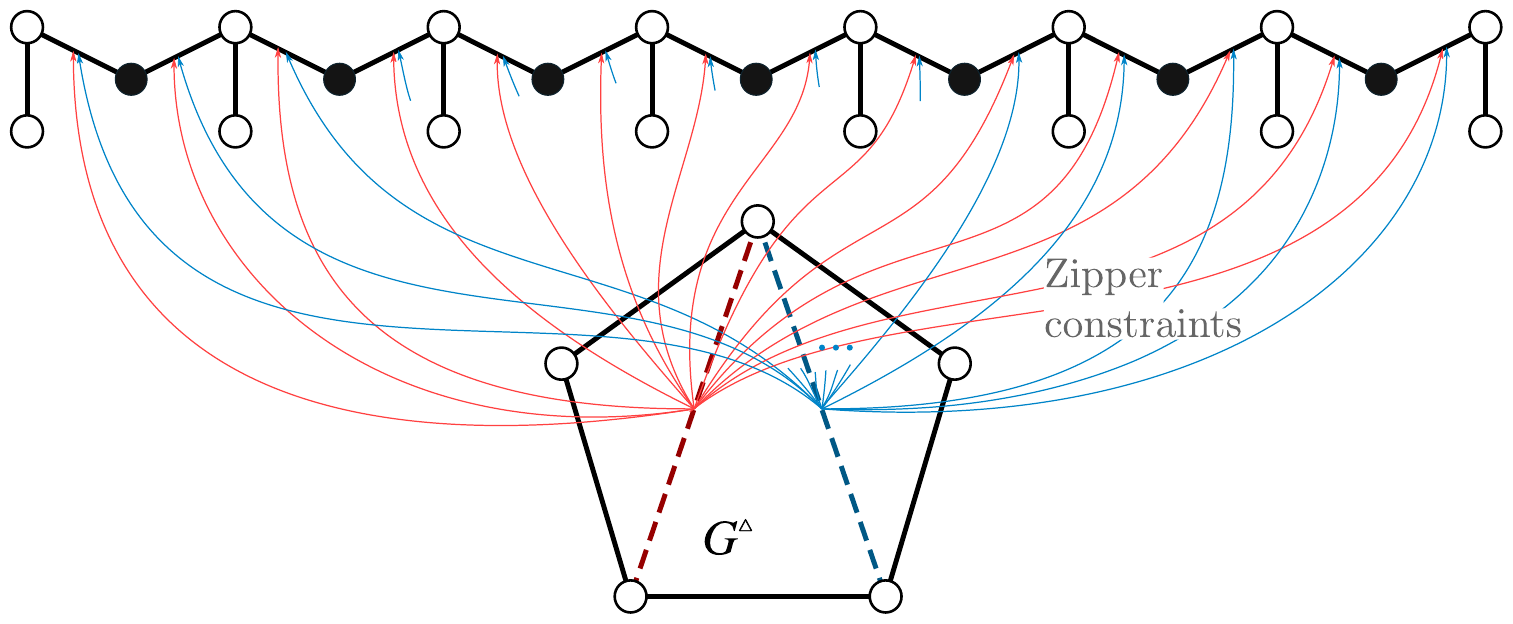}
\caption{A clique cover problem on a general graph is reduced to a clique cover
problem on a chordal graph with extra zipper constraints. 
Two dashed edges  are
added to the pentagon in order to triangulate it, 
resulting in a 
$7$-edge 
chordal
graph.
Dashed edges are made undesirable 
through the addition of 
zipper constraints that trigger the necklace string (at top). Zipper constraints are represented as arrows,
shown in red and blue to associate them visually with their dashed edge. (Note:
parts of some blue arrows have been
elided to reduce visual clutter).\label{fig:chordalization}} 
\end{figure}


\vspace{-2pt}
\begin{theorem}
\mzcc{G_\ecsimple}{Z}, where $G_\ecsimple$ is \ec and 
$Z \neq \emptyset$ is \nphard.
\end{theorem}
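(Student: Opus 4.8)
The plan is to give a polynomial-time reduction \emph{from} the empty-zipper problem \mcc{G}, which Lemma~\ref{lem:emptyzips} already establishes to be \nphard, \emph{to} the constrained problem in the statement. The entire purpose of the chordalization construction developed just above is that it turns an arbitrary graph into a \emph{chordal} one\,---\,and chordal graphs are \ec by Lemma~\ref{lem:chordal-fwu}\,---\,while "paying" for every newly introduced dashed edge with zipper constraints. In other words, the construction maps a hard instance that has no constraints into an instance whose graph is \ec but whose zipper collection is non-empty, which is exactly the regime this theorem asks about.

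Concretely, I would start from an arbitrary hard instance $G$ of minimum clique cover (equivalently \mcc{G}, with empty zippers), first arranging that $G$ is \emph{non-chordal}. This is harmless: one may pad any instance with a disjoint $4$-cycle, which raises the clique-cover number by exactly $2$ and destroys chordality, so the empty-zipper problem remains \nphard on non-chordal inputs. Applying the $O(n^4)$ triangulation-and-necklace construction then produces the pair $(\chord{G}, \neckl{Z})$ in polynomial time. Because $G$ is non-chordal, the triangulation must insert at least one dashed edge, and every dashed edge contributes its $2\ell-2$ zipper constraints, so $\neckl{Z}\neq\emptyset$. At the same time $\chord{G}$ is chordal, hence \ec. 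Thus $(\chord{G},\neckl{Z})$ is a legitimate instance of the problem in the statement: an \ec graph together with a non-empty set of zipper constraints.

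The reduction is then closed by Lemma~\ref{lm:chordalization_eqv}: from any optimal solution $\chord{S}$ to \mzcc{\chord{G}}{\neckl{Z}}, restricting to the covers touching vertices of $G$ yields, in polynomial time, an \emph{optimal} solution to \mzcc{G}{\emptyset} of the very same cardinality. Since the necklace and $G$ share no edges, the optima also differ only by the computable additive constant $2m_s+2m_d+1$ spent on the necklace, so the decision versions transfer cleanly as well. Consequently, a hypothetical polynomial-time algorithm for \mzcc{G_\ecsimple}{Z} on \ec graphs with non-empty zippers would, composed with the polynomial-time construction and the polynomial-time restriction, solve the \nphard problem \mcc{G}; this forces \mzcc{G_\ecsimple}{Z} to itself be \nphard.

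I expect the only genuinely delicate point to be guaranteeing $\neckl{Z}\neq\emptyset$, i.e., ensuring the source graph is non-chordal so that at least one dashed edge, and hence at least one zipper constraint, is actually created; the disjoint $4$-cycle padding settles this uniformly while preserving the reduction's value via a known additive shift. Everything else is bookkeeping: the construction is already shown to run in polynomial time, chordality of $\chord{G}$ together with \ec-ness via Lemma~\ref{lem:chordal-fwu} is immediate, and the value preservation is precisely the content of Lemma~\ref{lm:chordalization_eqv}.
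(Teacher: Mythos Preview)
Your proposal is correct and follows essentially the same approach as the paper: reduce from \mcc{G_\struct{F}} (Lemma~\ref{lem:emptyzips}) via the chordalization-and-necklace construction, then invoke Lemma~\ref{lm:chordalization_eqv} to land on a chordal---hence \ec\ by Lemma~\ref{lem:chordal-fwu}---graph with non-empty zippers. You add one detail the paper leaves implicit, namely the disjoint $4$-cycle padding to guarantee at least one dashed edge and hence $\neckl{Z}\neq\emptyset$, but otherwise the argument is identical.
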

\arxivOrShort{
\begin{proof}
One needs to reduce from a known
\nphard problem to an instance of
\mzcc{G_\ecsimple}{Z}, where $G_\ecsimple$ is \ec.
Leveraging 
Lemma~\ref{lem:emptyzips},
consider any instance of
\mcc{G_\struct{F}}. 
Then use Lemma~\ref{lm:chordalization_eqv} (taking $G_\struct{F}$ for $G$ and $\emptyset$ for $Z$), to  
obtain an 
$\chord{G}_\struct{F}$
and
$\chord{Z}$, where 
$\chord{G}_\struct{F}$
is chordal and \mzcc{\chord{G}_\struct{F}}{\chord{Z}} is an 
equivalent problem.
But $\chord{G}_\struct{F}$ is \ec (Lemma~\ref{lem:chordal-fwu}), hence the necessary reduction is complete.
\end{proof}
}{
\shortvspace{-0.8ex}
The proof uses 
Lemma~\ref{lem:emptyzips}, for a known \nphard problem, in order to obtain a reduction via
Lemma~\ref{lm:chordalization_eqv}.
\shortvspace{0.2ex}
}

The preceding leads to the following interpretation. If a filter \struct{F} induces an \ecadj $G_\struct{F}$ then:
\begin{itemize}
\item when $Z_\struct{F} = \emptyset$, since 
criterion~{\cri{2} of
\mzccp} holds vacuously, \fm{\struct{F}} is in \p;
\item when $Z_\struct{F} \neq \emptyset$,  despite $G_\struct{F}$ being \ec, \fm{\struct{F}} should be suspected as intractable because it is \nphard in the worst case.
\end{itemize}

Additional support for this link between the filter structure, \fmp, \mzccp, and worst-case intractability is the following:

\begin{theorem}
A graph $G$ can be realized as the compatibility graph of some filter if and
only if $G$ either: 
(1) has at least two connected components, or
(2) is a complete graph.
\label{thm:realizable_compatibility}
\end{theorem}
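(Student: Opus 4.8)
The plan is to prove both directions by exploiting the one extension that every state shares: the empty observation sequence, written $\epsilon$. Since $\epsilon \in \extension{\struct{F}}{v}$ for every state $v$, it is a common extension of every pair of states, and compatibility forces the outputs produced on $\epsilon$ to agree. Hence the first fact I would record is that $v \compatible w$ implies $c(v) = c(w)$, so that \emph{the compatibility graph carries no edge between states of differing output}. This single observation drives the necessity (``only if'') direction.

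For necessity, suppose $G = G_\struct{F}$ and argue the contrapositive: if $G$ is connected, then it is complete. Connectivity, together with the edge-within-output-class fact above, forces all states of $\struct{F}$ to share one output value. But if every state carries the same output, then along \emph{any} common extension of \emph{any} pair the two produced output traces are constant and identical, so every pair is compatible and $G$ is complete. Thus a connected compatibility graph is complete; equivalently, every compatibility graph is either complete (condition~(2)) or disconnected, i.e.\ has at least two components (condition~(1)), which is exactly the required conclusion.

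For sufficiency (``if''), I would give a single construction covering both cases. Take the filter whose state set is \emph{exactly} $V(G)$, assign to all vertices of one connected component a common, per-component-distinct output value (a single value suffices when $G$ is complete), pick any vertex as $v_0$, and introduce two disjoint families of fresh observation symbols. First, \emph{reachability} symbols $r_v$, one per vertex $v\neq v_0$, with $\tau(v_0,r_v)=v$, so that every state is reached from $v_0$ in one step. Second, for each within-component \emph{non-edge} $\{v,w\}$, a \emph{distinguishing} symbol $y_{vw}$ defined only at $v$ and $w$, with $\tau(v,y_{vw})$ a same-component state and $\tau(w,y_{vw})$ a state of a \emph{different} component; such a target exists precisely because a disconnected $G$ has at least two components. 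Tracing $y_{vw}$ then produces matching first outputs but differing second outputs, witnessing that $v$ and $w$ are incompatible.

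The verification that this construction realizes $G$ \emph{exactly}---no spurious edges and no missing ones---is the step I expect to be the main obstacle. The key is that every symbol is \emph{local}: $r_v$ is defined only at $v_0$, each $y_{vw}$ only on $\{v,w\}$, and each $\extension{\struct{F}}{\cdot}$ is prefix-closed. Consequently, for any pair $\{p,q\}$ a common extension of positive length must begin with a symbol defined at both $p$ and $q$, which happens only when $\{p,q\}$ is a within-component non-edge (via $y_{pq}$); in every other case the sole common extension is $\epsilon$. Hence same-component edges (including those incident to $v_0$) keep constant, equal output traces and stay compatible; within-component non-edges are broken by their witness; and cross-component pairs are broken already on $\epsilon$ by unequal outputs. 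The delicate point is confirming that the reachability symbols at $v_0$ do not accidentally couple $v_0$ with a same-component neighbor, which would destroy an edge; this follows from the same locality argument, since no $r_v$ is defined outside $v_0$. Finally, a single-vertex instance ($K_1$) is handled separately by adding one dummy observation to keep $Y$ nonempty.
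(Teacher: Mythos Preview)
Your proposal is correct and follows essentially the same approach as the paper's proof: both directions hinge on the observation that compatible states must share an output (via the empty extension $\epsilon$), and the construction uses per-component outputs, reachability symbols from $v_0$, and distinguishing symbols on non-edges. Your treatment is in fact slightly more careful than the paper's---you restrict distinguishing symbols to within-component non-edges only, you spell out the locality argument showing no spurious incompatibilities arise, and you explicitly handle $K_1$ with a dummy observation to keep $Y$ nonempty---but these are refinements of the same argument rather than a different route.
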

\arxivOrShort{
\begin{proof}
$\implies$ We form a filter $\struct{F}$ with a state for each vertex in~$G$.
First, identify the set of connected components in $G$;
suppose there are $n$ of them. Taking the set of outputs $C=\{1,\dots,n\}$,
have the output function, $c:V\to C$, give each state the number of the 
component to which it belongs.
Next, for any vertices $v$
and $w$ in $G$ that are not directly connected by an edge, 
suppose their corresponding states in $\struct{F}$ are
$s_v$ and $s_w$, respectively. Then add edges from $s_v$ and $s_w$ 
to states $s'_v$ and $s'_w$, chosen
arbitrary but with $c(s'_v) \neq c(s'_w)$;
label those two edges with the same observation $y_{(v,w)}$. 
When $n\geq 2$, the preceding will have made all such $s_v$ and $s_w$ disagree on a common
extension in $\struct{F}$, thus, ensuring $s_v \not\compatible s_w$.
When $n=1$ and $G$ is complete, no fitting $v$ and $w$ exist, so no edges will have been added to 
$\struct{F}$.
Finally, we pick an arbitrary state as an
initial state, and adjoin an edge from the initial state to each of the
other states in the filer,
labeling each edge with an observation unique to it. 
In the resulting $\struct{F}$, if two states do not
share the same common outgoing observation, then they are compatible. 
Vacuously, when $n=1$ with complete $G$, all states must be mutually compatible.
Therefore, $\struct{F}$ must have $G$ as its compatibility graph, $G_\struct{F}$.

$\impliedby$
It remains to show that any
single-component graph that is not also a complete graph can
never arise as the compatibility graph of any filter.
Any edges connecting two vertices in a compatibility graph must
involve states that have the same output value. Thus, to generate a
graph with a single component, all states must have identical outputs.
But then any pair of states must be compatible because their common
extensions can only produce identical sequences---just repetitions of the same output. 
Any graph that is not complete has some pair of vertices not connected by an edge,
which has to come from incompatible states, a possibly that has just 
been precluded.
\end{proof}
}{
The construction 
hinges on assigning
unique outputs to each connected component.
Thereafter, incompatibilities
are easily manufactured in a filter by simply introducing new observation symbols that go to states with different outputs.
}

\begin{corollary}
Chordal graphs with zippered necklace structures, $\chord{G}$,
as in the construction
involved in Lemma~\ref{lm:chordalization_eqv}, are realizable from filters.
\end{corollary}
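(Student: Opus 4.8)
The plan is to deduce the corollary as essentially a one-step consequence of Theorem~\ref{thm:realizable_compatibility}: that result certifies a graph as the compatibility graph of \emph{some} filter as soon as the graph either has at least two connected components or is complete. So the whole task reduces to inspecting the connectivity of $\chord{G}$ and checking which of those two clauses it satisfies.

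First I would isolate the component structure of $\chord{G}$. By the construction behind Lemma~\ref{lm:chordalization_eqv}, $\chord{G}$ is an edge-disjoint union of two pieces: the triangulated copy of the original graph (the vertices of $G$, together with $G$'s edges and the added dashed edges) and the necklace gadget (the $\ell$ pendants, the $\ell-1$ beads, and the strings joining them). Triangulation only inserts chords inside existing cycles, so it never merges distinct components of $G$; and the necklace is attached to $G$ solely through the zipper constraints $\neckl{Z}$, which are constraints rather than graph edges. Hence in $\chord{G}$ there is no edge whatsoever between a necklace vertex and a vertex of $G$. Since $G$ is non-empty it supplies at least one component, and the necklace supplies at least one more (a connected chain of pendants and beads, degenerating to a single pendant when $\ell=1$). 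Therefore $\chord{G}$ always has at least two connected components, clause~(1) of Theorem~\ref{thm:realizable_compatibility} holds, and the theorem hands us a filter \struct{F} with $G_\struct{F}=\chord{G}$. I would write this up as a two-sentence corollary that simply invokes the theorem.

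The single point that warrants care\,---\,and where I expect a careful reader to probe\,---\,is exactly the edge-versus-constraint distinction: the necklace exists only so that its strings can be tied, via $\neckl{Z}$, to the dashed edges of $G$, so one must state explicitly that this tying is effected through zipper constraints and never through an edge of $\chord{G}$, for otherwise the two pieces could fail to be disjoint and the component count could drop below two. I would also note, to avoid over-claiming, that the corollary asserts realizability of the \emph{graph} $\chord{G}$ alone; establishing in addition that the realizing filter's determinism-enforcing constraints coincide with $\neckl{Z}$ would be a strictly stronger statement, requiring the filter construction of Theorem~\ref{thm:realizable_compatibility} to be refined so that its $y$-children structure reproduces the necklace's zipper constraints.
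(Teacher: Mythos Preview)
Your proposal is correct and follows essentially the same approach as the paper: both arguments observe that the triangulated copy of $G$ contributes at least one component while the necklace contributes another, so condition~(1) of Theorem~\ref{thm:realizable_compatibility} applies. Your additional care in spelling out that the necklace is attached only via zipper constraints (not edges), and your caveat that only the \emph{graph} is being realized rather than the full $(\chord{G},\neckl{Z})$ pair, are sound clarifications that go beyond the paper's two-sentence proof.
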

\arxivOrShort{
\begin{proof}
The triangulation of the original graph $G$ has at least one connected component,
and the necklace forms another, 
thence condition (1) in Theorem~\ref{thm:realizable_compatibility} applies.
\end{proof}
}{}

\section{Repairable Zipper Constraints}
\label{sec:repairable}

If general zipper constraints introduce enough complexity that the problem is
hard even when the graph is \ec, and yet absence of zipper constraints gives
an easy problem, how do we obtain a more discriminating conception 
of zipper constraints and their
structure?
And, specifically, are there special cases of filters which give `nice' zipper
constraints? 
In this section, we first formalize sufficient conditions in order to ignore 
zipper constraints; in these cases once a clique cover
has been obtained, we can modify it to make the
zipper constraints hold. This 
modification step can repair, in polynomial time, any 
zipper-constraint-violating clique cover
for which the sufficient conditions are met and, crucially, can do this without causing any increase in size.



In any graph $G=(V,E)$, 
we refer to the neighbors of a vertex $v\in V$
by set {$\neighbor{G}{v} \coloneqq \{ w\in V\;|\;(w,v) \in E\} \cup \{v\}$}. 
Note that we explicitly include $v$ 
in its own  neighborhood.


Through neighborhoods,
the following condition now describes
a type of 
harmony between zipper constraints and the 
compatibility relation.


\begin{definition}[\cpzn]
\label{def:cpzn}
Given a graph $G=(V,E)$, 
it has the \defemp{\cpzn} property with
respect to an associated collection of zipper constraints $Z = \{ (U_1,W_1),  (U_2,W_2), \dots, (U_m,W_m)\}$,
where $U_k, W_k \subseteq V$, 
if and only if
every pair of vertices $\left\{\zwe{i}{0},\zwe{i}{1}\right\}=W_i$
satisfies either
$\neighbor{G}{\zwe{i}{0}}\subseteq \neighbor{G}{\zwe{i}{1}}$ or  $\neighbor{G}{\zwe{i}{0}}\supseteq \neighbor{G}{\zwe{i}{1}}$.
\end{definition}


The preceding definition
is a sufficient condition  to
yield \mzccp problems whose zipper constraints may be repaired. 

\begin{lemma}
\label{lm:repair}
Let filter $\struct{F}$'s
compatibility graph $G_\struct{F} = (V_{\struct{F}},E_{\struct{F}})$
 possess
the \cpzn property with respect to 
$Z_\struct{F}$.
Suppose cover
$\cover{K}$ 
of $G_\struct{F}$
violates $Z_\struct{F}$.
%
%
Then one may obtain, in polynomial time,
a cover $\cover{M}$
that will satisfy
$Z_\struct{F}$ and has
$|\cover{M}|\leq |\cover{K}|$.
\end{lemma}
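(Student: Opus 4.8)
The plan is to repair $\cover{K}$ by iteratively enlarging its cliques \emph{in place}, never creating a new clique, so that the cardinality can only stay the same or drop. The whole argument would rest on a single structural consequence of the \cpzn property. Fix a zipper constraint $(U_i, W_i)$ with $W_i = \{\zwe{i}{0}, \zwe{i}{1}\}$. By Definition~\ref{def:cpzn} the two children are comparable, so I would relabel them if necessary to arrange $\neighbor{G_\struct{F}}{\zwe{i}{0}} \subseteq \neighbor{G_\struct{F}}{\zwe{i}{1}}$, calling $\zwe{i}{1}$ the \emph{dominant} child. Because a vertex lies in its own neighborhood, $\zwe{i}{0} \in \neighbor{G_\struct{F}}{\zwe{i}{0}} \subseteq \neighbor{G_\struct{F}}{\zwe{i}{1}}$, so $\zwe{i}{0}$ and $\zwe{i}{1}$ are adjacent; hence the constraint is satisfiable at all, since its two children may legitimately share a clique.

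The key repair move is as follows. Call constraint $i$ \emph{violated} if some clique of the current cover contains $U_i$ yet no clique contains both of $\zwe{i}{0}, \zwe{i}{1}$. When this happens, I would select any clique $K$ of the current cover with $\zwe{i}{0} \in K$ (one exists, as the cover covers $\zwe{i}{0}$) and replace $K$ by $K \cup \{\zwe{i}{1}\}$. The enlarged set is still a clique: every $x \in K$ is adjacent to, or equal to, $\zwe{i}{0}$, so $x \in \neighbor{G_\struct{F}}{\zwe{i}{0}} \subseteq \neighbor{G_\struct{F}}{\zwe{i}{1}}$, which places $x$ in $\neighbor{G_\struct{F}}{\zwe{i}{1}}$ and thus adjacent to, or equal to, $\zwe{i}{1}$. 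Hence $\zwe{i}{1}$ is mutually adjacent to all of $K$, and $K \cup \{\zwe{i}{1}\}$ is a clique that now contains all of $W_i$, satisfying constraint $i$. The orientation here is essential: adding the dominant child to a clique holding the dominated child preserves cliqueness, whereas the reverse need not, and \cpzn comparability is exactly what fixes the direction.

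It then remains to argue termination in polynomial time with no growth in cardinality. Each move only adds a vertex to an existing clique; it neither deletes a vertex from any clique nor introduces a new one. Consequently coverage of $V_\struct{F}$ is preserved, every clique remains a clique, and $|\cover{M}| \le |\cover{K}|$ (with equality unless two cliques happen to coincide and are merged). Moreover, since vertices are never removed, once some clique contains a given $W_\ell$ this stays true forever; that is, the consequent of criterion~\cri{2} for constraint $\ell$, once satisfied, is never undone. Each repair permanently turns one previously-violated constraint's consequent true, so the number of repairs is at most $m = |Z_\struct{F}|$. Detecting a violated constraint and performing a move are both polynomial in the sizes of $G_\struct{F}$, $Z_\struct{F}$, and $\cover{K}$, so iterating until no constraint is violated yields the desired $\cover{M}$ in polynomial time.

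I expect the crux to be the bookkeeping that guarantees the process cannot loop: enlarging a clique can newly satisfy the \emph{antecedent} of some other constraint, since a clique may come to contain that constraint's $U_\ell$, which threatens a cascade of further repairs. The decisive observation that tames this is monotonicity---because no move ever removes a vertex, a satisfied consequent can never be reverted---so each of the $m$ constraints can trigger a repair at most once and the cascade is provably finite. The only other point needing care is the direction of domination in the key move, which, as noted, the comparability guaranteed by \cpzn resolves unambiguously.
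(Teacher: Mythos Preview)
Your proof is correct and rests on the same key insight as the paper's: the \cpzn comparability lets you safely add the dominant child to any clique already containing the dominated child, preserving cliqueness while forcing the consequent of the constraint to hold. The only difference is packaging---the paper performs this enlargement for every constraint simultaneously in a single pass (defining $M_i \coloneqq K_i \cup \{\text{dominant partners of dominated vertices in }K_i\}$), thereby making \emph{all} consequents true at once and sidestepping any cascade/termination concern, whereas you iterate constraint-by-constraint and need your monotonicity bookkeeping to bound the number of repairs; both arguments are sound and yield the same $\cover{M}$.
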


\newcommand{\lv}[1]{\ensuremath{{v_{#1}}^{{\!\!\!\!}^{\!{}_{(\ell)}}}\!}}

\arxivOrShort{
\begin{proof}
Given cover $\cover{K} = \{ K_1, K_2, \dots, K_k\}$,
the proof constructs $\cover{M} = \{ M_1, M_2, \dots, M_{m}\}$ such that all zipper constraints will be satisfied through $\cover{M}$, with $m \leq k$.
For $i\in\{1,\dots, k\}$, form the collection of sets $M_{i}$:
\vspace*{-4pt}
\begin{align}
\label{eq:repair}
M_i \coloneqq K_i \cup \bigg\{ \zweBig{a}{\widehat{n}} \;\bigg\vert\; &
\Big(U_a,\left\{\zwe{a}{0}, \zwe{a}{1}\right\}\!\Big) \in Z_\struct{F}, n\in\{0,1\},\\[-6pt] & \;\;\zweBig{a}{n}\in K_i,
\neighbor{G_{\struct{F}}}{\zweBig{a}{n\phantom{\widehat{n}}}\!\!\!\!\!} \subseteq \neighbor{G_{\struct{F}}}{\zweBig{a}{\widehat{n}}} 
\bigg\}, \nonumber
\end{align}
where $\widehat{n} \coloneqq 1 -n$.
Essentially,
to each $K_i$ we are
adding some extra elements.
When $K_i$ contains an element $\zweBig{a}{n}$ that is paired with 
$\zweBig{a}{\widehat{n}}$,
another vertex in the zipper constraints,
then we include $\zweBig{a}{\widehat{n}}$ if it
possesses a neighborhood that is no smaller than $\zweBig{a}{n}$'s.
After doing this for all $K_1, \dots, K_{k}$, we obtain the collection $\cover{M}=\{M_1,\dots, M_{m}\}$ with $m \leq k$ (and $m < k$ only if some  sets grew to become identical).

All vertices previously covered are still covered. 
Moreover, each $M_i$ is a clique because $K_i$ is a clique, and one
can argue inductively as if the extra elements were added sequentially:
As each element $\zweBig{a}{\widehat{n}}$
is added, it is compatible
with element $\zweBig{a}{n}$ already in the clique (owing to the pair being a zipper constraint target). Element  $\zweBig{a}{\widehat{n}}$
has a compatibility neighborhood at least as large as that of $\zweBig{a}{n}$'s, so is compatible with the clique, and the composite thus forms a clique itself.

It only remains to show that all $Z_\struct{F}$ is now satisfied. 
Consider any constraint $(U_i, W_i)$, 
with $W_i = \left\{\zwe{i}{0},\zwe{i}{1}\right\}$, 
then we show that  both $\zwe{i}{0}$ and $\zwe{i}{1}$ appear together in some element in \cover{M}.
The \cpzn property means
there is an $n \in \{0,1\}$ such that 
$\neighbor{G_{\struct{F}}}{\zweBig{i}{n\phantom{\widehat{n}}}\!\!\!\!\!} 
\subseteq \neighbor{G_{\struct{F}}}{\zweBig{i}{\widehat{n}}}$.
Since $\cover{K}$ is a cover, at least one $K_j$ 
exists such that
$\zwe{i}{n}\in K_j$. But then,
 via~\eqref{eq:repair},
$M_j$ must include
 both  $\zwe{i}{n}$ and  $\zwe{i}{\widehat{n}}$.
\end{proof}
}{
\noindent \emph{Repair algorithm and proof.}
Given cover $\cover{K} = \{ K_1, K_2, \dots, \allowbreak K_k\}$,
the proof constructs $\cover{M} = \{ M_1, M_2, \dots, M_{m}\}$ such that all zipper constraints will be satisfied through $\cover{M}$, with $m \leq k$.
For $i\in\{1,\dots, k\}$, form the sets $M_{i}$:
\vspace*{-4pt}
\begin{align}
\label{eq:repair}
M_i \coloneqq K_i \cup \bigg\{ \zweBig{a}{\widehat{n}} \;\bigg\vert\; &
\Big(U_a,\left\{\zwe{a}{0}, \zwe{a}{1}\right\}\!\Big) \in Z_\struct{F}, n\in\{0,1\},\\[-6pt] & \;\;\zweBig{a}{n}\in K_i,
\neighbor{G_{\struct{F}}}{\zweBig{a}{n\phantom{\widehat{n}}}\!\!\!\!\!} \subseteq \neighbor{G_{\struct{F}}}{\zweBig{a}{\widehat{n}}} 
\bigg\}, \nonumber
\end{align}
where $\widehat{n} \coloneqq 1 -n$.
Essentially,
to each $K_i$ we are
adding some extra elements.
When $K_i$ contains an element $\zweBig{a}{n}$ that is paired with 
$\zweBig{a}{\widehat{n}}$,
another vertex in the zipper constraints,
then we include $\zweBig{a}{\widehat{n}}$ if it
possesses a neighborhood that is no smaller than $\zweBig{a}{n}$'s.
After doing this for all $K_1, \dots, K_{k}$, we obtain the collection $\cover{M}=\{M_1,\dots, M_{m}\}$ with $m \leq k$ (and $m < k$ only if some  sets grew to become identical).

All vertices previously covered are still covered. 
Moreover, each $M_i$ is a clique because $K_i$ is a clique, and one 
can argue inductively as if the extra elements were added sequentially:
As each element $\zweBig{a}{\widehat{n}}$
is added, it is compatible
with element $\zweBig{a}{n}$ already in the clique (owing to the pair being a zipper constraint target). Element  $\zweBig{a}{\widehat{n}}$
has a compatibility neighborhood at least as large as that of $\zweBig{a}{n}$'s, so is compatible with the clique, and the composite thus forms a clique itself.

It only remains to show that all $Z_\struct{F}$ is now satisfied. 
Consider any constraint $(U_i, W_i)$, 
with $W_i = \left\{\zwe{i}{0},\zwe{i}{1}\right\}$, 
then we show that  both $\zwe{i}{0}$ and $\zwe{i}{1}$ appear together in some element in \cover{M}.
The \cpzn property means
there is an $n \in \{0,1\}$ such that 
$\neighbor{G_{\struct{F}}}{\zweBig{i}{n\phantom{\widehat{n}}}\!\!\!\!\!} 
\subseteq \neighbor{G_{\struct{F}}}{\zweBig{i}{\widehat{n}}}$.
Since $\cover{K}$ is a cover, at least one $K_j$ 
exists such that
$\zwe{i}{n}\in K_j$. But then,
 via~\eqref{eq:repair},
$M_j$ must include
 both  $\zwe{i}{n}$ and  $\zwe{i}{\widehat{n}}$.
 \qed
 \vspace*{6pt}
}

Notice that the scope of Definition~\ref{def:cpzn} includes
graphs and zipper constraints generally, while
Lemma~\ref{lm:repair} concerns compatibility graphs and zipper constraints obtained specifically from filters, and the
additional structure inherited from the filter shows up in the proof itself.

\section{Special cases: efficiently reducible filters}
\label{sec:special-cases}

Up to this point, sufficient conditions have been presented for favorable \mzccp problem instances.
Each condition concerns a separate factor: 
the first, in Section~\ref{sec:hardness}, deals with structural properties of graphs; while in Section~\ref{sec:repairable},
the second involves the zipper constraints being accordant with neighborhoods of potentially zipped vertices.
The two reflect different dimensions and, as argued above, form distinct sources of the problem's hardness.  
To consolidate---

\shortvspace{-0.75ex}
\ourclass{A 
sub-class of filters that can be minimized efficiently}{
Any filter with \ec compatibility graph and \cpzn can be minimized efficiently.
}

\shortvspace{-0.75ex}
One first constructs the compatibility graph, 
finds any minimum clique cover, and then repairs it using Lemma~\ref{lm:repair}. 

\arxivOrShort{


The requirements for this sub-class, however, pertain to products derived from filters. 
To avoid properties that
must be verified indirectly,
as they involve intermediate products,
we will now
give closer scrutiny to filters themselves.

}
{

Notice that to verify membership of this sub-class 
involves
requirements 
pertaining to products derived from filters; 
we will now
give closer scrutiny to filters themselves.
}

\subsection{Handy properties that yield efficiently reducible filters}
\label{subsec:handy}

We seek properties that are recognizable and verifiable on
filters directly.
To start with, here is a sufficient condition:

\arxivOrShort{
\begin{definition}[\glc]
\label{def:globally-lang-comparable}
A filter $\struct{F}$ is 
\emph{\glc} if, for every pair of compatible states $v \compatible w$,
either extensions
$\extension{\struct{F}}{v} \subseteq \extension{\struct{F}}{w}$ or extensions
$\extension{\struct{F}}{v} \supseteq \extension{\struct{F}}{w}$.
\end{definition}
}{
\begin{definition}
\label{def:globally-lang-comparable}
A filter $\struct{F}$ is 
\emph{\glc} if, for every pair of compatible states $v \compatible w$,
either extensions
$\extension{\struct{F}}{v} \subseteq \extension{\struct{F}}{w}$ or extensions
$\extension{\struct{F}}{v} \supseteq \extension{\struct{F}}{w}$.
\end{definition}
}

\arxivOrShort{
A lemma will make use of the next property:

\begin{property}[condition for transitivity]
\label{prop:transitivity1}
Given three states $u, v, w$ in $\struct{F}$ such that $u\compatible v$ and
$v\compatible w$, we have $u\compatible w$ if they satisfy 
one of the following conditions:
\begin{enumerate}
\item %
$\extension{\struct{F}}{u}\supseteq \extension{\struct{F}}{v}\supseteq \extension{\struct{F}}{w}$\label{condition:one},
\item 
$\extension{\struct{F}}{v}\supseteq \extension{\struct{F}}{u}$ and
$\extension{\struct{F}}{v}\supseteq \extension{\struct{F}}{w}$. \label{condition:two} 
\end{enumerate}
\end{property}

\begin{proof}
To show 1), state $u$ must agree with $w$ on all common extensions: $\extension{\struct{F}}{u} \cap \extension{\struct{F}}{w} = \extension{\struct{F}}{w}$. 
Since
$u \compatible v$ means $u$ agrees with $v$ on
$\extension{\struct{F}}{u} \cap \extension{\struct{F}}{v} \supseteq
\extension{\struct{F}}{w}$.
And $v \compatible w$, so $v$ agrees with $w$ on 
$\extension{\struct{F}}{v} \cap \extension{\struct{F}}{w} =
\extension{\struct{F}}{w}$.
But each agreeing with $v$ 
on $\extension{\struct{F}}{w}$,
means they must agree mutually too.
For 2),
since $u \compatible v$, both $u$ and $v$ agree on extensions $\extension{\struct{F}}{u} \cap \extension{\struct{F}}{v}$; similarly
      $w \compatible v$ means $w$ and $v$ agree on extensions $\extension{\struct{F}}{w} \cap \extension{\struct{F}}{v}$.
To establish the desired fact, 
state $u$ must agree with $w$ on all common extensions, but $\extension{\struct{F}}{u} \cap \extension{\struct{F}}{w} 
= \big(\extension{\struct{F}}{u} \cap \extension{\struct{F}}{v}\big) \bigcap \big(\extension{\struct{F}}{w} \cap \extension{\struct{F}}{v}\big)$, so
each of them agreeing on $v$ gives enough coverage to agree mutually.
\end{proof}

The first use of this property is in the next lemma, showing that filters that  are \glc have 
compatibility graphs that are \ec.

\begin{lemma}\label{lm:glc_chordal}
If $\struct{F}$ is any \glc filter, 
then its compatibility graph $G_{\struct{F}}$ is chordal.
\end{lemma}

\begin{proof}
Chordal graphs have no cycle with size \num{4} or larger; to be chordal, for
any two edges that shared a vertex, $v$ say, between $u$ and $v$, and $v$ and
$w$, it is enough to show there must be a chord connecting $u$ and $w$.
But, for any \glc filter, the 
condition in Property~\ref{prop:transitivity1} holds,
so we know $u \compatible w$.
\end{proof}

}
{
Filters that  are \glc have 
compatibility graphs that are \ec.
\begin{lemma}\label{lm:glc_chordal}
If $\struct{F}$ is any \glc filter, 
then its compatibility graph $G_{\struct{F}}$ is chordal.
\end{lemma}
}

In fact, \glc filters also have benign zipper constraints.

\begin{lemma}\label{lm:glc_to_cpzn}
If $\struct{F}$ is a \glc filter,
then compatibility graph $G_\struct{F}$
 possesses  
the \cpzn property with respect to 
zipper constraints $Z_\struct{F}$.
\end{lemma}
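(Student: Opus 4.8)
The plan is to reduce the claim to a statement about a single zipper-constraint target pair, and then to convert comparability of \emph{extensions} (which \glc supplies) into comparability of \emph{neighborhoods} (which \cpzn demands). Consider any constraint $(U_i, W_i) \in Z_\struct{F}$; by the definition of a determinism-enforcing zipper constraint its target $W_i = \{\zwe{i}{0}, \zwe{i}{1}\}$ consists of the $y$-children of a compatible pair of states drawn from $U_i$ (if the two children coincide the pair degenerates to a singleton and is trivially comparable with itself, so we may assume them distinct). By Definition~\ref{def:cpzn} it suffices to show, for each such constraint, that one of $\neighbor{G_\struct{F}}{\zwe{i}{0}} \subseteq \neighbor{G_\struct{F}}{\zwe{i}{1}}$ or $\neighbor{G_\struct{F}}{\zwe{i}{0}} \supseteq \neighbor{G_\struct{F}}{\zwe{i}{1}}$ holds.

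First I would establish that the two target vertices are themselves compatible, $\zwe{i}{0} \compatible \zwe{i}{1}$. This holds because they are the $y$-children of compatible parents: any common extension $s$ of $\zwe{i}{0}$ and $\zwe{i}{1}$ yields a common extension $ys$ of the two parents, along which their output sequences agree by compatibility; stripping the shared prefix shows that the outputs along $s$ read from $\zwe{i}{0}$ and from $\zwe{i}{1}$ agree. Hence $\zwe{i}{0}\compatible\zwe{i}{1}$, and the \glc hypothesis then forces their extensions to be comparable; without loss of generality take $\extension{\struct{F}}{\zwe{i}{0}} \subseteq \extension{\struct{F}}{\zwe{i}{1}}$.

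The core step turns this into the neighborhood inclusion $\neighbor{G_\struct{F}}{\zwe{i}{1}} \subseteq \neighbor{G_\struct{F}}{\zwe{i}{0}}$. Take any neighbor $x$ of $\zwe{i}{1}$, so $x \compatible \zwe{i}{1}$. Since $\struct{F}$ is \glc, the extensions of $x$ and $\zwe{i}{1}$ are themselves comparable, which splits into two cases: either $\extension{\struct{F}}{x} \supseteq \extension{\struct{F}}{\zwe{i}{1}} \supseteq \extension{\struct{F}}{\zwe{i}{0}}$, matching condition~(\ref{condition:one}) of Property~\ref{prop:transitivity1}, or $\extension{\struct{F}}{\zwe{i}{1}}$ contains both $\extension{\struct{F}}{x}$ and $\extension{\struct{F}}{\zwe{i}{0}}$, matching condition~(\ref{condition:two}). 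In either case, applying Property~\ref{prop:transitivity1} to the triple $x \compatible \zwe{i}{1} \compatible \zwe{i}{0}$ (with $\zwe{i}{1}$ as the middle state) yields $x \compatible \zwe{i}{0}$, so $x \in \neighbor{G_\struct{F}}{\zwe{i}{0}}$. Since $\zwe{i}{1} \in \neighbor{G_\struct{F}}{\zwe{i}{0}}$ as well (the pair is compatible), this establishes the inclusion, and thus comparability of the two neighborhoods. As the argument is uniform over every constraint in $Z_\struct{F}$, the graph $G_\struct{F}$ has the \cpzn property.

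I expect the core step to be the main obstacle, since it encodes the monotonicity principle that a state with the \emph{larger} extension set has the \emph{smaller} compatibility neighborhood. The subtlety is that comparability of extensions is not by itself enough to draw a conclusion about a third state $x$; one genuinely needs the transitivity of Property~\ref{prop:transitivity1}, whose hypotheses happen to be met precisely because \glc upgrades each pairwise compatibility (here, both $x \compatible \zwe{i}{1}$ and $\zwe{i}{0} \compatible \zwe{i}{1}$) into a comparison of extensions. Establishing compatibility of the target pair in the second step is the small but essential ingredient that lets $\zwe{i}{1}$ serve as the middle state and makes the transitivity argument go through.
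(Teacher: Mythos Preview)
Your proof is correct and follows the same overall strategy as the paper: show the target pair $\{\zwe{i}{0},\zwe{i}{1}\}$ is compatible, use \glc to order their extensions, and then deduce that the state with the smaller extension set has the larger neighborhood. The one noteworthy difference lies in how the core step is executed. You invoke \glc a second time on the pair $(x,\zwe{i}{1})$ in order to line up the hypotheses of Property~\ref{prop:transitivity1}. The paper instead argues directly: from $\extension{\struct{F}}{\zwe{i}{0}} \subseteq \extension{\struct{F}}{\zwe{i}{1}}$ one gets $\extension{\struct{F}}{x}\cap\extension{\struct{F}}{\zwe{i}{0}} \subseteq \extension{\struct{F}}{x}\cap\extension{\struct{F}}{\zwe{i}{1}}$, so $x\compatible\zwe{i}{1}$ already forces agreement of $x$ with $\zwe{i}{1}$ on the smaller set, and $\zwe{i}{0}\compatible\zwe{i}{1}$ forces $\zwe{i}{0}$ to agree with $\zwe{i}{1}$ there as well, giving $x\compatible\zwe{i}{0}$. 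This avoids the second appeal to \glc and shows the neighborhood inclusion is really a consequence of the single extension containment; your route via Property~\ref{prop:transitivity1} reaches the same conclusion but uses slightly more than is needed.
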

\arxivOrShort{
\begin{proof}
For any pair of states $v$, $w$ and zipper constraint set $Z_\struct{F}=\{(U_1, W_1), (U_2,
W_2), \dots, (U_m, W_m)\}$, if $v\not\compatible w$, then there exists no $W_i$ such
that $\{v, w\} = W_i$ owing to the construction of $Z_\struct{F}$. 
If $v\compatible w$,
since $\struct{F}$ is \glc,
assume without loss of generality that
$\extensions{F}{v}\subseteq \extensions{F}{w}$. 
For any state $u$ with $u\compatible w$,
as $\extensions{F}{u}\cap\extensions{F}{w} \supseteq \extensions{F}{u}\cap\extensions{F}{v}$, 
$u$ and $v$ agree on their common extensions, i.e. $u\compatible v$.
Hence, $\neighbor{F}{v}\supseteq \neighbor{F}{w}$. 
And Definition~\ref{def:cpzn}, thus, holds for $G_\struct{F}$ with respect to $Z_\struct{F}$.
\end{proof}
}{}

\begin{theorem}
\Glc filters can be minimized efficiently.
\end{theorem}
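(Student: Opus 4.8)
The plan is to assemble the two lemmas about \glc filters proved immediately above together with the efficient-minimization recipe boxed at the start of Section~\ref{sec:special-cases}. That recipe applies to any filter whose compatibility graph is simultaneously \ec and enjoys the \cpzn property, and my whole task reduces to verifying that a \glc filter meets both of these conditions; once that is done, the stated procedure (build the graph, find a minimum clique cover, repair) runs in polynomial time and the theorem follows as a corollary.

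First I would certify that $G_\struct{F}$ is \ec. By Lemma~\ref{lm:glc_chordal}, a \glc filter has a chordal compatibility graph $G_\struct{F}$, and by Lemma~\ref{lem:chordal-fwu} chordal graphs are \ec; so there is a polynomial-time algorithm producing a minimum clique cover $\cover{K}$ of $G_\struct{F}$. I would also recall that $G_\struct{F}$ and $Z_\struct{F}$ are themselves constructible from $\struct{F}$ in polynomial time, as noted before Lemma~\ref{lm:fm_eqv}. Next I would invoke Lemma~\ref{lm:glc_to_cpzn}, which states that a \glc filter's compatibility graph has the \cpzn property relative to $Z_\struct{F}$. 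This is exactly the hypothesis of the repair result, Lemma~\ref{lm:repair}, so feeding $\cover{K}$ into that repair procedure yields, in polynomial time, a cover $\cover{M}$ satisfying every constraint in $Z_\struct{F}$ with $|\cover{M}| \le |\cover{K}|$.

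It then remains to argue $\cover{M}$ is actually optimal for \mzcc{G_\struct{F}}{Z_\struct{F}}, not merely feasible. The key observation is that every zipped cover is in particular a clique cover (it satisfies criterion~\cri{1}), so the minimum zipped cover cannot be smaller than the minimum clique cover, whose size is $|\cover{K}|$. Combining this with $|\cover{M}| \le |\cover{K}|$ forces $|\cover{M}| = |\cover{K}|$ and shows $\cover{M}$ is a minimum zipped cover. Converting $\cover{M}$ back through the equivalence of Lemma~\ref{lm:fm_eqv} then produces a genuine minimizer of $\struct{F}$, and every step above was polynomial, establishing that \glc filters can be minimized efficiently.

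The main obstacle is not the chaining of lemmas, which is routine once the pieces are in place, but ensuring the optimal size survives the repair step: the repaired cover must satisfy the determinism-enforcing constraints \emph{without} spending any additional cliques, since otherwise the zipper constraints could force a strictly larger filter than the unconstrained clique-cover bound. All of that delicacy, however, is already discharged inside Lemma~\ref{lm:repair}, whose guarantee $|\cover{M}| \le |\cover{K}|$ is precisely what makes the minimum-clique-cover lower bound tight for the constrained problem; consequently the present theorem is essentially a packaging of Lemmas~\ref{lm:glc_chordal}, \ref{lm:glc_to_cpzn}, and~\ref{lm:repair}.
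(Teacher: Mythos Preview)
Your proposal is correct and follows essentially the same route as the paper: combine Lemma~\ref{lm:glc_chordal} (with Lemma~\ref{lem:chordal-fwu}) and Lemma~\ref{lm:glc_to_cpzn}, then apply the repair of Lemma~\ref{lm:repair}. You additionally spell out the optimality argument (that any zipped cover is a clique cover, so $|\cover{K}|$ is a lower bound which $\cover{M}$ attains) and the conversion back via Lemma~\ref{lm:fm_eqv}; these details are implicit in the paper's boxed recipe but not restated in its proof.
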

\arxivOrShort{
\begin{proof}
One puts Lemmas~\ref{lm:glc_chordal} and~\ref{lm:glc_to_cpzn} together: the former
tells us, following Lemma~\ref{lem:chordal-fwu}, that a minimum clique cover can be obtained quickly;
the latter, via Lemma~\ref{lm:repair}, says that cover can then be repaired to satisfy
$Z_\struct{F}$.
\end{proof}
}
{
Here, the proof just puts Lemmas~\ref{lm:glc_chordal} and~\ref{lm:glc_to_cpzn} together. 
}

\medskip

\arxivOrShort{
A different but also potentially useful way to identify special cases is to use the
compatibility relation: for a particular problem instance, one might determine
whether specific algebraic properties hold.

As Definition~\ref{def:compat} involves common extensions,
all compatibility relations on states  will
be reflexive ($v \compatible v$, any state being compatible with itself)  and
symmetric ($v \compatible w\, \iff\, w \compatible v $). But some 
relations may have additional properties:

\begin{definition}[equivalence]
\label{def:equivalence}
If filter $\struct{F}$ induces a compatibility relation on  states \compatible\ that is transitive, it is an equivalence relation, and termed a \defemp{compatibility equivalence}.
\end{definition}

Next, is a property involving neighborhoods.

\begin{definition}[\nc]
\label{def:neighborhood-comparable}
A filter $\struct{F}$ is \emph{\nc} if and only if every pair of vertices
$v,w$ in $G_{\struct{F}}$ with $\neighbor{G_\struct{F}}{v}\cap \neighbor{G_\struct{F}}{w} \neq
\emptyset$, satisfies either
$\neighbor{G_\struct{F}}{v}\subseteq \neighbor{G_\struct{F}}{w}$ or  $\neighbor{G_\struct{F}}{v}\supseteq \neighbor{G_\struct{F}}{w}$.
\end{definition}

Though somewhat disguised, the two preceding definitions express the identical concepts.

\begin{property}
\label{prop:transitivity}
A filter $\struct{F}$ is \nc if and only if it induces a compatibility equivalence.
\end{property}
\begin{proof}
$\implies$ (via the contrapositive)\qquad
Suppose $u\compatible v$ and $v\compatible w$, but $u\not\compatible w$.  
Thus, $u \in \neighbor{G_\struct{F}}{u}$ and $u \not\in \neighbor{G_\struct{F}}{w}$, 
and $w \in \neighbor{G_\struct{F}}{w}$ and $w \not\in \neighbor{G_\struct{F}}{u}$. 
Then $\struct{F}$ is not \nc as
$\{v\} \subseteq \neighbor{G_\struct{F}}{u} \cap \neighbor{G_\struct{F}}{w}$ and also, simultaneously,
$\neighbor{G_\struct{F}}{u}\not\subseteq \neighbor{G_\struct{F}}{w}$
and $\neighbor{G_\struct{F}}{u}\not\supseteq \neighbor{G_\struct{F}}{w}$.

\smallskip
$\impliedby$ (also via the contrapositive)\qquad 
Supposing $\struct{F}$ is not \nc, 
there must be a pair of vertices $u$ and $w$ in $G_{\struct{F}}$ such that
$\{v\} \subseteq \neighbor{G_\struct{F}}{u}\cap \neighbor{G_\struct{F}}{w}$,
for some $v$, 
along with $\neighbor{G_\struct{F}}{u}\not\subseteq \neighbor{G_\struct{F}}{w}$ and $\neighbor{G_\struct{F}}{u}\not\supseteq \neighbor{G_\struct{F}}{w}$.
The first fact means $w\compatible v$ and $v\compatible u$.
The second fact means some vertex $z$ exists so that either 
\begin{enumerate}
\item [1)] %
$z \in \neighbor{G_\struct{F}}{u}$ and $z \not\in \neighbor{G_\struct{F}}{w}$, so
$u\compatible z$, $w\not\compatible z$. 
Assuming relation $\compatible$ to be transitive produces a contradiction because 
$w \compatible u$, and $u \compatible z$, but $w \not\compatible z$.
\item [2)]%
$z \not\in \neighbor{G_\struct{F}}{u}$ and $z \in \neighbor{G_\struct{F}}{w}$, 
with an analgous argument shows
$\compatible$ fails to be transitive.
\end{enumerate}
\end{proof}
}
{
A different but also potentially useful way to identify special cases is to use the
compatibility relation: for a particular problem instance, one might determine
whether specific algebraic properties hold. To get there,
first we start with a characteristic 
involving neighborhoods.

\begin{definition}
\label{def:neighborhood-comparable}
A filter $\struct{F}$ is \emph{\nc} 
\arxivOrShort{if and only if}{iff}
every pair of vertices
$v,w$ in $G_{\struct{F}}$ with $\neighbor{G_\struct{F}}{v}\cap \neighbor{G_\struct{F}}{w} \neq
\emptyset$, satisfies either
$\neighbor{G_\struct{F}}{v}\subseteq \neighbor{G_\struct{F}}{w}$ or  $\neighbor{G_\struct{F}}{v}\supseteq \neighbor{G_\struct{F}}{w}$.
\end{definition}

Next, we connect it with an
algebraic property.

\begin{property}
\label{prop:transitivity}
A filter $\struct{F}$ is \nc 
\arxivOrShort{if and only if}{iff}
it induces a compatibility
relation \compatible\ 
on states 
such that \compatible\ 
is an equivalence relation.
\end{property}

}


Further, the \nc property induces an \ec compatibility graph: 
\begin{lemma}\label{lm:nc_chordal}
Any  \nc filter has a compatibility graph that is chordal.
\end{lemma}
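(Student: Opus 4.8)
The plan is to reduce chordality to the algebraic characterization already established just above. By Property~\ref{prop:transitivity}, a filter is \nc exactly when its induced compatibility relation \compatible is an equivalence relation; so I would first invoke that property to conclude that \compatible---already reflexive and symmetric by Definition~\ref{def:compat}---is moreover transitive. The whole argument then rests on the simple structural observation that an equivalence relation forces the compatibility graph to be a disjoint union of cliques, and that such graphs are trivially chordal.

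Carrying this out, the key steps, in order, are as follows. First, apply Property~\ref{prop:transitivity} to obtain transitivity of \compatible. Second, partition the state set $V_{\struct{F}}$ into equivalence classes under \compatible, noting that any two states lying in the same class are compatible, so each class induces a complete subgraph of $G_{\struct{F}}$. Third, observe that no edge can join two \emph{distinct} classes: an edge would mean compatibility, and transitivity would then collapse the two classes into one, contradicting distinctness; hence the classes are pairwise disconnected in $G_{\struct{F}}$. Fourth, conclude that $G_{\struct{F}}$ is a disjoint union of cliques. To finish, I would verify chordality directly against Definition~\ref{def:chordal}: any cycle on four or more vertices must lie entirely inside a single connected component, since distinct components share no edges; but each component is complete, so any two vertices of the cycle that are non-consecutive along it are already joined by an edge, furnishing the required chord.

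I do not expect a serious obstacle, as the heavy lifting has been front-loaded into Property~\ref{prop:transitivity}. The only point warranting care is the passage from ``disjoint union of cliques'' to chordality, where one must rule out a chordless long cycle sneaking across component boundaries; but this is immediate from the disconnectedness of distinct equivalence classes established in the third step. Consequently, since chordal graphs are \ec by Lemma~\ref{lem:chordal-fwu}, this lemma supplies the graph-structure half of an efficiency guarantee for \nc filters, exactly mirroring the role Lemma~\ref{lm:glc_chordal} plays for \glc filters.
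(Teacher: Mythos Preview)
Your proposal is correct and follows essentially the same approach as the paper: invoke Property~\ref{prop:transitivity} to see that \compatible\ is an equivalence, observe that the compatibility graph is therefore a disjoint union of cliques (what the paper calls a cluster graph), and conclude chordality. The only cosmetic difference is that you verify chordality directly from Definition~\ref{def:chordal}, whereas the paper simply cites that cluster graphs are a sub-class of chordal graphs.
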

\arxivOrShort{
\begin{proof}
In the compatibility graph,
each equivalence class of states forms a disjoint component, every component being itself a complete graph.
Such graphs are called cluster graphs, and 
are a special sub-class of chordal graphs~\cite{gross2013handbook}.
\end{proof}
}
{}

Also we can show that a \nc filter 
has zipper constraints that are repairable.
\begin{lemma}\label{lm:nc_to_cpzn}
Any \nc filter $\struct{F}$
has compatibility graph $G_\struct{F}$
that 
 possesses  
the \cpzn property with respect to 
zipper constraints $Z_\struct{F}$.
\end{lemma}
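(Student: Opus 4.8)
The plan is to verify Definition~\ref{def:cpzn} directly: for every zipper constraint $(U_i, W_i) \in Z_\struct{F}$, writing its target as the pair $W_i = \{\zwe{i}{0}, \zwe{i}{1}\}$, I must show that the two neighborhoods $\neighbor{G_\struct{F}}{\zwe{i}{0}}$ and $\neighbor{G_\struct{F}}{\zwe{i}{1}}$ are comparable under inclusion. The approach is two-staged: first establish that the two target vertices are themselves compatible (equivalently, adjacent in $G_\struct{F}$), and then feed this into the \nc hypothesis of Definition~\ref{def:neighborhood-comparable}, which is precisely an assertion about pairs of vertices whose neighborhoods meet.

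The crux lies in the first stage, and it is here that the structure inherited from the filter---not merely the bare graph---must be used. By the construction of $Z_\struct{F}$, the target $W_i$ consists of $y$-children of the mutually compatible set $U_i$; thus $\zwe{i}{0} = \tau(u_0, y)$ and $\zwe{i}{1} = \tau(u_1, y)$ for some $u_0, u_1 \in U_i$ with $u_0 \compatible u_1$. I would then argue that compatibility propagates from parents to these particular children: for any common extension $s \in \extension{\struct{F}}{\zwe{i}{0}} \cap \extension{\struct{F}}{\zwe{i}{1}}$, the string $ys$ is a common extension of $u_0$ and $u_1$, on which they must agree since $u_0 \compatible u_1$; restricting that agreement to the suffix traced from the children shows $\zwe{i}{0}$ and $\zwe{i}{1}$ agree on $s$ as well. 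Since $s$ was arbitrary, $\zwe{i}{0} \compatible \zwe{i}{1}$. I expect this to be the main obstacle: it is a genuinely filter-level argument about extensions and output sequences, and it is also logically indispensable, since the comparability that \cpzn demands can fail outright for an incompatible pair.

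With compatibility of the targets in hand, the conclusion follows quickly. Because $\zwe{i}{0} \compatible \zwe{i}{1}$, the two vertices are adjacent, so $\{\zwe{i}{0}, \zwe{i}{1}\} \subseteq \neighbor{G_\struct{F}}{\zwe{i}{0}} \cap \neighbor{G_\struct{F}}{\zwe{i}{1}}$ and the intersection is non-empty. The \nc hypothesis then yields at once that one neighborhood contains the other---exactly the condition Definition~\ref{def:cpzn} requires of $W_i$. As the constraint was arbitrary, $G_\struct{F}$ possesses the \cpzn property with respect to $Z_\struct{F}$. One can alternatively route this last step through Lemma~\ref{lm:nc_chordal}: an \nc filter has a cluster (disjoint-cliques) compatibility graph, so the two compatible targets lie in one complete component and in fact share an identical neighborhood, making comparability trivial.
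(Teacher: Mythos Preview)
Your proposal is correct and follows essentially the same route as the paper's proof: first observe that the target pair $W_i$ consists of $y$-children of compatible parents and hence are themselves compatible, then use this to obtain $\{\zwe{i}{0},\zwe{i}{1}\}\subseteq \neighbor{G_\struct{F}}{\zwe{i}{0}}\cap\neighbor{G_\struct{F}}{\zwe{i}{1}}$ and invoke the \nc hypothesis. You spell out the extension-level justification for why compatibility propagates to the children more explicitly than the paper does, and you add the optional cluster-graph observation, but the core argument is the same.
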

\arxivOrShort{
\begin{proof}
The construction process that gives the 
zipper constraints $Z_\struct{F}=\{(U_1, W_1), (U_2,
W_2), \dots, (U_m, W_m)\}$ only places pairs of compatible states in
the sets $W_i$, as both vertices are downstream $y$-children of compatible parents.
Suppose $\{v, w\} = W_i$ then we have
$\{v, w\} \subseteq (\neighbor{G_\struct{F}}{v}\cap\neighbor{G_\struct{F}}{w})$ since $v\compatible w$.
Filter $\struct{F}$
being \nc implies 
either
$\neighbor{G_\struct{F}}{v}\subseteq \neighbor{G_\struct{F}}{w}$ or $\neighbor{G_\struct{F}}{v}\supseteq
\neighbor{G_\struct{F}}{w}$, which satisfies the requirements for Definition~\ref{def:cpzn}.
\end{proof}
}

\begin{theorem}
\Nc filters can be minimized efficiently.
\end{theorem}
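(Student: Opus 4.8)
The plan is to mirror the argument used for \Glc filters: the statement follows by chaining together the two structural lemmas already established for \nc filters, together with the generic efficiency result from the boxed sub-class above. First I would invoke Lemma~\ref{lm:nc_chordal} to conclude that the compatibility graph $G_\struct{F}$ of any \nc filter $\struct{F}$ is chordal. Since chordal graphs are \ec by Lemma~\ref{lem:chordal-fwu}, there is a polynomial-time algorithm producing a minimum clique cover $\cover{K}$ of $G_\struct{F}$; at this stage the zipper constraints $Z_\struct{F}$ are simply ignored.

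Next I would appeal to Lemma~\ref{lm:nc_to_cpzn}, which guarantees that $G_\struct{F}$ possesses the \cpzn property with respect to $Z_\struct{F}$. This is exactly the hypothesis required by the repair procedure: by Lemma~\ref{lm:repair}, the cover $\cover{K}$ (which may violate $Z_\struct{F}$) can be transformed in polynomial time into a cover $\cover{M}$ that satisfies $Z_\struct{F}$ and has $|\cover{M}| \leq |\cover{K}|$. The one point deserving explicit care here is optimality. Writing $\mathrm{OPT}_u$ for the minimum size of an (unconstrained) clique cover of $G_\struct{F}$ and $\mathrm{OPT}_c$ for the minimum size of a cover additionally satisfying $Z_\struct{F}$, every constrained cover is in particular a clique cover, so $\mathrm{OPT}_u \leq \mathrm{OPT}_c$. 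As $|\cover{K}| = \mathrm{OPT}_u$, the repaired cover satisfies $|\cover{M}| \leq \mathrm{OPT}_u \leq \mathrm{OPT}_c$, while feasibility of $\cover{M}$ forces $|\cover{M}| \geq \mathrm{OPT}_c$; hence $|\cover{M}| = \mathrm{OPT}_c$ and $\cover{M}$ is an optimal solution to \mzcc{G_\struct{F}}{Z_\struct{F}}.

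Finally I would convert this solution back into a filter. By Lemma~\ref{lm:fm_eqv} and the accompanying construction of~\cite{zhang20cover}, a minimizer of $\struct{F}$ can be recovered from the minimum zipped clique cover in polynomial time, so the whole pipeline---build $G_\struct{F}$ and $Z_\struct{F}$, compute $\cover{K}$, repair to $\cover{M}$, and reconstruct---runs in polynomial time and solves \fm{\struct{F}}.

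In truth there is no serious obstacle remaining once Lemmas~\ref{lm:nc_chordal} and~\ref{lm:nc_to_cpzn} are in hand: the theorem is essentially a corollary, and its proof reduces to assembling these ingredients exactly as in the \Glc case. The only content requiring a sentence of justification is the monotonicity/optimality argument in the middle step above, namely that repairing a minimum \emph{unconstrained} cover yields a genuine minimum for the \emph{constrained} problem; that reasoning is already packaged within the size guarantee of Lemma~\ref{lm:repair}.
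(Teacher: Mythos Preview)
Your proposal is correct and follows essentially the same approach as the paper: invoke Lemma~\ref{lm:nc_chordal} with Lemma~\ref{lem:chordal-fwu} to obtain a minimum clique cover efficiently, then use Lemmas~\ref{lm:nc_to_cpzn} and~\ref{lm:repair} to repair it to satisfy $Z_\struct{F}$. Your version is more explicit than the paper's one-sentence proof, in particular spelling out the $\mathrm{OPT}_u \leq \mathrm{OPT}_c$ sandwich that certifies optimality of the repaired cover and the final conversion via Lemma~\ref{lm:fm_eqv}, but the underlying argument is the same.
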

\arxivOrShort{
\begin{proof}
Lemma~\ref{lm:nc_chordal} (along with Lemma~\ref{lem:chordal-fwu}) say that a minimum clique cover can be obtained efficiently,
and Lemmas~\ref{lm:nc_to_cpzn} and~\ref{lm:repair} mean it can be repaired to satisfy
$Z_\struct{F}$.
\end{proof}
}{}

The properties described in the section\,---one on the extensions and another on neighborhoods---\,are useful
because they are not difficult requirements to verify and they imply facts about both the compatibility graph \emph{and} zipper constraints.
Still, they are fairly abstract. 
One might wonder,
for instance, whether
Definitions~\ref{def:globally-lang-comparable} and~\ref{def:neighborhood-comparable} 
really differ essentially. 
(They are distinct, as we will see shortly.) 

\subsection{Prior cases in the literature}
\label{section:prior}
We now use the conditions just introduced 
to
re-examine
three
sub-classes of filter 
for which
polynomial-time
minimization 
has been
reported in the literature.
This treatment provides a new understanding of the 
relationships between these 
special cases.
To start, each sub-class must be defined.

When discussing the fact that \fmp is \nphard, the authors in
\cite{okane17concise} point out that this may, at first, seem unexpected since
minimization of deterministic finite automata (DFA) is efficient (e.g.,
via the theorem of Myhill--Nerode\,\cite{hopcroft79introduction}).  
As an intuition for this difference, they offer the following perspective: 
when a sequence crashes on a
DFA, that string is outside of the automaton's language---whereas, when a
sequence crashes on a filter, \fmp allows the minimizer to select any output
for it, and some choices will likely give more compression than others. These 
degrees-of-freedom represent a combinatorial choice within the \fmp problem.

One way to curtail the explosion of such choices, then, is to
ensure that no strings can ever crash:
\arxivOrShort{
\begin{definition}[\nme filters\,\cite{saberifar17special}]
A \defemp{\nme filter} 
is a filter with the property that every state has an outgoing edge for every
observation in $Y$.
\end{definition}
}{
\begin{definition}[\cite{saberifar17special}]
In a \defemp{\nme filter},
every state has an outgoing edge for every
observation in $Y$.
\end{definition}
}
For any \nme filter $\struct{F}$, we have the language $\extension{\struct{F}}{v_0} = \KleeneStr{Y}$, 
i.e., the Kleene star of the set of observations.

The authors of \cite{saberifar17special} also identify a sort of obverse to the foregoing sub-class. If \nme
filters fully re-use observations, every state having all of them, consider next the
sub-class where observations occur at precisely one state:

\arxivOrShort{
\begin{definition}[\oao filters\,\cite{saberifar17special}]
In a \defemp{\oao filter}, each observation in $Y$ appears at most once. 
\end{definition}
}{
\begin{definition}[\cite{saberifar17special}]
A \defemp{\oao filter} is a filter where every observation in $Y$ appears at most once. 
\end{definition}
}
In a quite different context, the cardinality of the observation set was shown to affect
complexity; constraining this gives another sub-class of filters:

\arxivOrShort{
\begin{definition}[\un filters\,\cite{zhang22sso}]
A \defemp{\un filter} 
is a filter with a set of observations that is a singleton, i.e.,
$|Y| = 1$.
\end{definition}
}{
\begin{definition}[\cite{zhang22sso}]
A \defemp{\un filter} 
is a filter with a set of observations that is a singleton, i.e.,
$|Y| = 1$.
\end{definition}
}
Despite the fact that the preceding three sub-classes 
impose a diverse assortment of constraints, their common trait is efficiency:
any filter belonging to those sub-classes can be minimized in time that is polynomial in the input size~\cite[Thms.~3 and~4]{saberifar17special},
\cite[Thm.~3]{zhang22sso}.

The sub-class of \oao filters is disjoint from the \nme ones, with the sole
exception of filters with only a single state. Any \un filter is (exclusively)
either a linear chain, or includes a cycle.  In the latter case, every state
will have an outgoing edge, and that filter therefore has no edges missing.
A \un filter with multiple states can only have 
observations that appear once when it is a chain with $|V|=2$;
longer chains are neither \nme nor \oao filters.

\arxivOrShort{
\begin{figure}[t]
     \hspace*{.4cm}\includegraphics[width=0.8\linewidth]{./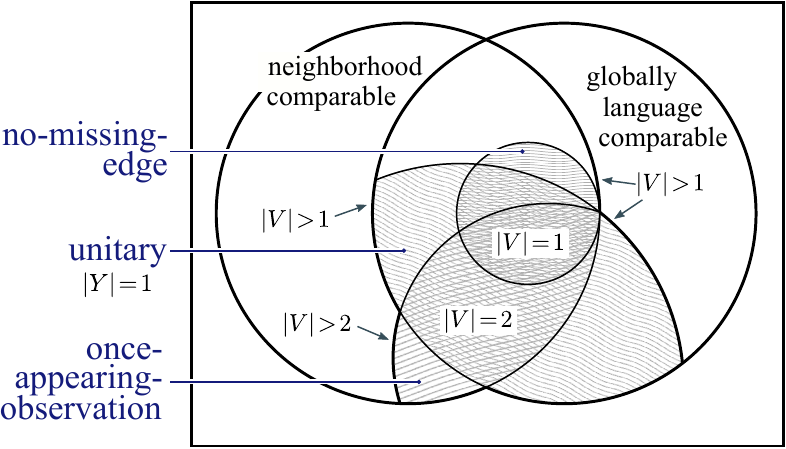}
	\caption{The relationships between the three sub-classes of filter in terms of the \glc and \nc properties identified. 
    (Note: Trivial filters, such as those with empty language, no vertices, and empty observation set have been omitted.) }
	\label{fig:venn}
\end{figure}

\begin{lemma}
A \nme filter is both \glc and \nc.
\end{lemma}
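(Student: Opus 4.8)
The plan is to exploit the single structural consequence of the \nme property: since every state has an outgoing edge for every observation, the transition function is total, and so \emph{any} sequence in \KleeneStr{Y} can be traced starting from \emph{any} state. The first step, therefore, is to strengthen the remark that $\extension{\struct{F}}{v_0} = \KleeneStr{Y}$ into the stronger statement that $\extension{\struct{F}}{v} = \KleeneStr{Y}$ for \emph{every} state $v$ of $\struct{F}$. This follows by an easy induction on the length of a sequence, using totality of $\tau$ at each step. With this in hand, both properties follow with essentially no further combinatorial work.

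For the \glc claim, I would observe that once every extension set equals \KleeneStr{Y}, any two states $v$ and $w$ have $\extension{\struct{F}}{v} = \extension{\struct{F}}{w} = \KleeneStr{Y}$. Hence for any compatible pair $v \compatible w$, both inclusions $\extension{\struct{F}}{v} \subseteq \extension{\struct{F}}{w}$ and $\extension{\struct{F}}{v} \supseteq \extension{\struct{F}}{w}$ hold trivially, which is exactly what Definition~\ref{def:globally-lang-comparable} demands. So $\struct{F}$ is \glc.

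For the \nc claim, I would route through Property~\ref{prop:transitivity}, which says $\struct{F}$ is \nc if and only if its compatibility relation is transitive; it therefore suffices to verify transitivity. The key simplification is that, because the common extensions $\extension{\struct{F}}{v}\cap\extension{\struct{F}}{w}$ of every pair equal \KleeneStr{Y}, the relation $v \compatible w$ reduces to the condition that the outputs produced by $v$ and $w$ agree on \emph{all} of \KleeneStr{Y}. This global agreement condition is plainly transitive: if $v$ and $w$ agree everywhere and $w$ and $u$ agree everywhere, then $v$ and $u$ agree everywhere, so $v \compatible u$. Being already reflexive and symmetric, \compatible\ is then an equivalence relation, and Property~\ref{prop:transitivity} delivers \nc.

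The whole argument hinges on the one observation that the common extensions are always \KleeneStr{Y}; I expect this to be the only step needing any care, since it is precisely what collapses the comparability conditions of both definitions into trivialities (for \glc) or a one-line transitivity check (for \nc). Notably, no appeal to the zipper constraints $Z_\struct{F}$ or to the clique-cover machinery is required.
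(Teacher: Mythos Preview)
Your proposal is correct and follows essentially the same route as the paper's own proof: both arguments first observe that $\extension{\struct{F}}{v}=\KleeneStr{Y}$ for every state~$v$, deduce the \glc property immediately from equality of all extension sets, and then establish \nc by showing that compatibility (now meaning agreement on all of~$\KleeneStr{Y}$) is transitive, invoking the equivalence characterization of Property~\ref{prop:transitivity}. The only difference is presentational---you cite Property~\ref{prop:transitivity} explicitly whereas the paper leaves that step implicit.
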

\begin{proof}
The first is straightforward: all pairs 
$v,w \in V(\struct{F})$ have  
$\extension{\struct{F}}{v} \subseteq \extension{\struct{F}}{w}$ as
every state has $\extension{\struct{F}}{v} = \KleeneStr{Y}$.
For the second: 
suppose $u,v,w$ are in $G_{\struct{F}}$ with ($i$)~$u \compatible v$ and ($ii$)~$v \compatible w$.
Then ($i$) means $u$ and $w$ both give the same output for every sequence $s \in \KleeneStr{Y}$. 
And ($ii)$ means the same for $v$ and $w$. But then, for every $s \in \KleeneStr{Y}$, 
 $u$ and $w$ must give outputs that are identical, thence
$u \compatible w$. Transitivity having been established, relation $\compatible$
is an equivalence for any \nme $\struct{F}$. 
\end{proof}

The remaining two sub-classes will aid in distinguishing the 
\glc property from the \nc property.

\begin{lemma}
A \oao filter is \nc; furthermore, non-trivial instances of such filters are not \glc.
\end{lemma}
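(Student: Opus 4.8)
The plan is to handle the two assertions separately: for \nc I invoke Property~\ref{prop:transitivity}, so the task reduces to showing the compatibility relation is transitive, and for the failure of \glc I produce an explicit non-trivial witness.

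\emph{That \oao filters are \nc.} The crucial structural fact is that, under the \oao restriction, each observation labels at most one edge, so for any fixed $y \in Y$ at most one state possesses an outgoing $y$-edge. First I would argue that distinct states $v \neq w$ can share only the empty extension: were some nonempty sequence $y_1 y_2 \cdots$ to lie in $\extension{\struct{F}}{v} \cap \extension{\struct{F}}{w}$, then both $v$ and $w$ would need an outgoing $y_1$-edge, which the \oao property forbids. Since $\epsilon$ belongs to every state's extensions, we get $\extension{\struct{F}}{v} \cap \extension{\struct{F}}{w} = \{\epsilon\}$ whenever $v \neq w$, and agreement on $\epsilon$ is exactly the condition $c(v) = c(w)$. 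Hence, for distinct states, $v \compatible w$ holds iff $c(v) = c(w)$; adding reflexivity, \compatible\ coincides with equality of outputs, which is an equivalence relation and in particular transitive. Property~\ref{prop:transitivity} then certifies that $\struct{F}$ is \nc.

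\emph{That non-trivial instances escape \glc.} Here I would exhibit a concrete filter violating Definition~\ref{def:globally-lang-comparable}. Take an initial state $v_0$ with two outgoing edges, under distinct observations $y_1 \neq y_2$, into states $a$ and $b$ chosen so that $c(a) = c(b)$, and then give $a$ and $b$ outgoing edges under two further, mutually distinct observations $y_3$ and $y_4$. Every observation occurs exactly once, so the filter is \oao; its language and observation set are non-empty, so it is non-trivial. By the computation above $a \compatible b$, since the pair shares only $\epsilon$ and has matching output. Yet $y_3 \in \extension{\struct{F}}{a} \setminus \extension{\struct{F}}{b}$ and $y_4 \in \extension{\struct{F}}{b} \setminus \extension{\struct{F}}{a}$, so neither extension set contains the other; the defining comparability of \glc fails on the compatible pair $\{a,b\}$, whence $\struct{F}$ is not \glc.

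The main obstacle is isolating the right consequence of the \oao hypothesis in the first part: once one sees that distinct states cannot share a nonempty extension, both the equivalence (and thus \nc) and the incomparability that breaks \glc drop out immediately. A secondary matter requiring care is the reading of ``non-trivial.'' The statement should be understood as asserting that not every \oao filter is \glc, rather than as a universal claim over all non-trivial instances, because a two-state chain whose two states share an output is \oao, non-trivial, and nonetheless \glc (its extensions are nested, $\{\epsilon\} \subseteq \{\epsilon, y\}$). The witness above suffices to establish the intended separation, confirming that \oao filters, while always \nc, need not be \glc.
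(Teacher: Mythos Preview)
Your argument is correct and, for the \nc half, essentially identical to the paper's: both observe that distinct states in a \oao filter share only $\epsilon$ as a common extension, so compatibility collapses to equality of outputs, an equivalence relation, and Property~\ref{prop:transitivity} finishes.

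For the second half the paper takes a slightly broader route. Rather than exhibiting one witness, it argues that in \emph{any} \oao filter containing a pair of (compatible) vertices $v,w$ each bearing at least one outgoing edge, the extensions $\extension{\struct{F}}{v}$ and $\extension{\struct{F}}{w}$ are automatically incomparable---because the outgoing labels at $v$ and $w$ must be disjoint. This gives a general sufficient condition for failing \glc and simultaneously pins down what ``non-trivial'' means in context: the existence of such a compatible pair with outgoing edges. Your two-state chain is not a counterexample under that reading, since its second state has no outgoing edge. Your specific-witness approach is perfectly adequate to establish the separation the lemma is after, and your caution about the quantifier is well placed; the paper's formulation buys a characterization of exactly which \oao filters escape \glc, at the cost of leaving the compatibility requirement on the pair implicit.
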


\begin{proof}
Any two states in a \oao filter can only have the empty subsequence as a common extension. Thus, two states are 
compatible if and only if they have the same output, clearly defining equivalent classes of states, and thus
the \nc property holds. 
Any \oao filter with a pair of vertices $v$ and $w$, each bearing at least one outgoing edge, will have 
$\extension{\struct{F}}{v} \not\subseteq \extension{\struct{F}}{w}$ and 
$\extension{\struct{F}}{v} \not\supseteq \extension{\struct{F}}{w}$.
\end{proof}

\begin{lemma}
Any \un filter is \glc, but even simple instances of such filters may fail to be \nc.

\end{lemma}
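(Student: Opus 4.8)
The statement splits into two essentially independent claims, and I would treat them separately. For the first, that every \un filter is \glc, the plan is to exploit the fact that with $|Y|=1$ every observation sequence is a power $y^k$, so the traceable sequences from any state $v$ form an initial segment $\{y^0, y^1, \dots, y^{d(v)}\}$ of $\KleeneStr{Y}$, where $d(v)$ is the length of the maximal run of transitions from $v$ (with $d(v)=\infty$ exactly when a cycle is reachable, in which case $\extension{\struct{F}}{v}=\KleeneStr{Y}$). Since all such sets are initial segments of a single totally ordered set, any two of them are nested under $\subseteq$. Hence not merely every compatible pair but \emph{every} pair of states has comparable extensions, and Definition~\ref{def:globally-lang-comparable} is satisfied a fortiori. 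The only thing to check carefully is that $\extension{\struct{F}}{v}$ is indeed prefix-closed, which holds because a shorter run is obtained from a longer one by stopping early.

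For the second claim I would exhibit a single small witness and invoke Property~\ref{prop:transitivity}, which says a filter is \nc iff its compatibility relation is transitive; so it suffices to display a \un filter whose compatibility fails transitivity. The main obstacle, and the reason some care is needed, is structural: because $|Y|=1$ forces out-degree at most one, the reachable portion of any \un filter is a single chain or a chain feeding into a cycle. In the cyclic case every reachable state has extension set $\KleeneStr{Y}$, so compatibility reduces to equality of the (eventually periodic) output words and is automatically an equivalence---hence \nc holds there. The witness must therefore be an acyclic chain, and even then the ``nested'' configuration cannot break transitivity: one checks that for indices $p<q<r$ along the chain, $v_p\compatible v_q$ and $v_q\compatible v_r$ propagate agreement on the shared overlap and force $v_p\compatible v_r$. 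The failure thus has to be arranged so that the mediating state is the chain's terminal (shortest-extension) state.

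Concretely, I would take the four-state chain $v_0\to v_1\to v_2\to v_3$ over $Y=\{y\}$, with outputs $c(v_0)=c(v_1)=c(v_3)=0$ and $c(v_2)=1$, initial state $v_0$, and $v_3$ terminal. Here $v_3$ shares only the empty extension with $v_0$ and with $v_1$, so $v_0\compatible v_3$ and $v_1\compatible v_3$ (all three output $0$); but reading a single $y$ from $v_0$ lands in $v_1$ (output $0$) while from $v_1$ it lands in $v_2$ (output $1$), so $v_0\not\compatible v_1$. This is precisely the transitivity failure $v_0\compatible v_3\compatible v_1$ with $v_0\not\compatible v_1$; equivalently $\neighbor{G_\struct{F}}{v_0}=\{v_0,v_3\}$ and $\neighbor{G_\struct{F}}{v_1}=\{v_1,v_3\}$ meet in $v_3$ yet are incomparable, directly contradicting Definition~\ref{def:neighborhood-comparable}. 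A brief remark that three states are provably too few---on any $3$-chain, any two of the three compatibility relations force the third, so transitivity cannot break---would justify calling this a minimal ``simple'' instance.
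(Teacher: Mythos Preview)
Your proposal is correct and follows essentially the same approach as the paper: the cycle/chain dichotomy with nested extension sets for the first claim, and an explicit counterexample witnessing a transitivity failure of $\compatible$ for the second. Your witness is in fact tighter than the paper's---you use a \num{4}-state chain (and argue \num{3} is impossible), whereas the paper exhibits a \num{5}-state example; the additional structural analysis explaining why the mediating state must be terminal is a nice touch the paper omits.
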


\begin{proof}
If the filter includes a cycle, then every state $v$ has $\extension{\struct{F}}{v} = \KleeneStr{Y}$.
Otherwise, the language is finite, then ordering the states $v_0, v_1, v_2, \dots, v_{|V|}$ along the chain
gives that any pair $v_i$ and $v_j$, with $i \leq j$ have
$\extension{\struct{F}}{v_i} \supseteq \extension{\struct{F}}{v_j}$.
This \num{5} state \un filter 
\raisebox{-3pt}{\!\includegraphics[width=0.4\linewidth]{./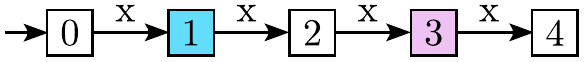}},
where colors represent outputs,
is an example that violates the requirements for \nc because 
$0 \compatible 4$ and $4 \compatible 2$, but $0 \not\compatible 2$.
\end{proof}

By way of summary, Figure~\ref{fig:venn} illustrates the relationships between
these filters and the properties of Section~\ref{subsec:handy}.  None of the sub-classes strictly subsumes the others,
yet 
common underlying properties explain
their
efficiency: ($i$)~their
compatibility graphs will be chordal and ($ii$)~their zipper constraints are tame.
The route by which these facts
are established depends upon the sub-class: using the \glc property, ($i$) comes
from Lemma~\ref{lm:glc_chordal} and ($ii$) from Lemma~\ref{lm:glc_to_cpzn};
using the \nc property, they come from Lemmas~\ref{lm:nc_chordal} and~\ref{lm:nc_to_cpzn}, respectively.
}
{
\begin{figure}[t]
\hspace*{.6cm}\includegraphics[width=0.8\linewidth]{./figure/venn.pdf}
	\caption{The relationships between the three sub-classes of filter in terms of the globally language and neighborhood comparable properties identified. 
	(Note: Degenerate filters are omitted.)
	\label{fig:venn}
    }
\end{figure}

Each of these three filters possess at least one of the properties introduced in Section~\ref{subsec:handy}.  
Along with their interrelationships, this
is illustrated in 
Figure~\ref{fig:venn}.
None of the sub-classes strictly subsumes the others,
yet 
common underlying properties explain
their
efficiency: ($i$)~their
compatibility graphs will be chordal and ($ii$)~their zipper constraints are tame.
The route by which these facts
are established depends upon the sub-class: using the \glc property, ($i$) comes
from Lemma~\ref{lm:glc_chordal} and ($ii$) from Lemma~\ref{lm:glc_to_cpzn};
using the \nc property, they come from Lemmas~\ref{lm:nc_chordal} and~\ref{lm:nc_to_cpzn}, respectively.
}

\subsection{A new instance now recognizable as efficiently solvable}
Thus, it has turned out that the compatibility graphs of previously known special
cases are all chordal. 
This gives a ready means by which extra \fmp instances can be 
identified 
as having efficient solutions, instances that have never 
previously been
recognized 
as such.
One basic example is shown in 
Figure~\ref{fig:perfect-solvable}.
This filter,
 $\struct{F}_{\fastsimple}$, has a
compatibility graph that fails to be chordal, but is a perfect graph. 
Additionally, 
$\struct{F}_{\fastsimple}\!$
possesses the \cpzn property because states \num{5}, \num{7}, \num{9} are all mutually compatible.
More generally, the perfect graphs include many non-chordal members, including bipartite graphs, Tur\'{a}n graphs, etc.\ 
Any filters with these as compatibility graphs (and
Theorem~\ref{thm:realizable_compatibility} says there \emph{are} such filters) are candidates as additional special cases over and above what
was previously known; one then needs to ensure their zipper constraints are benign: say, being $\emptyset$ or repairable. 


\setlength{\belowcaptionskip}{-1pt}
\begin{figure}[t]
     \centering
     \begin{subfigure}[b]{0.485\linewidth}
         \centering
         \includegraphics[width=\linewidth]{./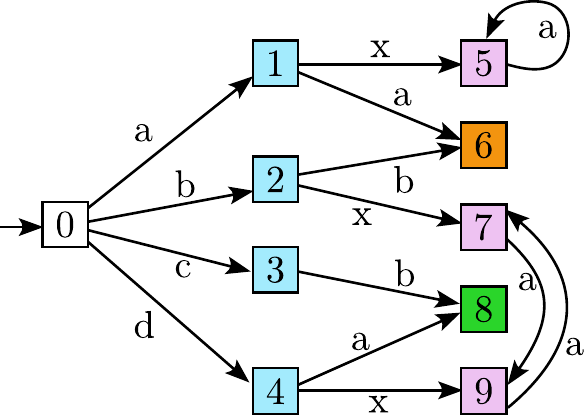}
         \caption{A simple \num{10}-state filter $\struct{F}_{\fastsimple}$, ~~\phantom{x} \phantom{.}~~~~with outputs visualized as colors.}
         \label{fig:perfect-filter}
     \end{subfigure}
     \hfill
     \begin{subfigure}[b]{0.48\linewidth}
         \centering
         \includegraphics[width=0.85\linewidth]{./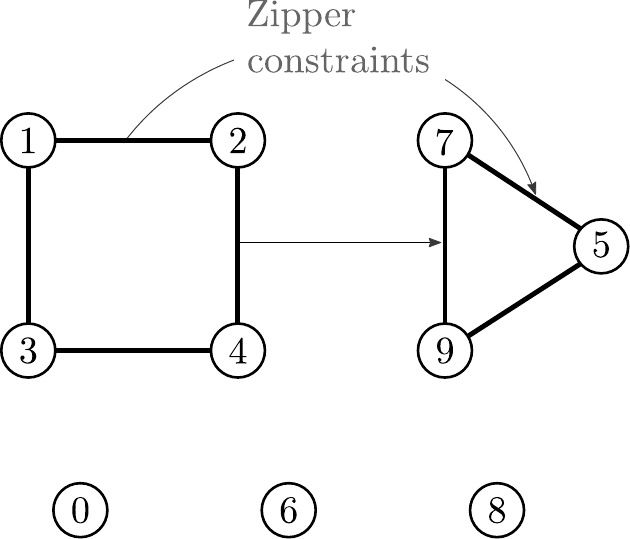}
         \caption{Compatibility graph $G_{\struct{F}_{\fastsimple}}$~~\phantom{.} \phantom{.}~~~~\,and zipper constraint set.}
         \label{fig:perfect-compatibility}
     \end{subfigure}
	\setlength{\belowcaptionskip}{-10pt}
	\caption{A filter now recognizable as efficiently minimizable.}
	\label{fig:perfect-solvable}
\end{figure}

%
%
%
%
\section{Conclusion}

This paper has identified basic properties underlying instances of filter
minimization that are easy to solve.  There are two aspects: the
structure of the compatibility graph and the determinism-enforcing zipper
constraints; they are distinct and both are shown to matter---as \nphard problems arise if a single
aspect is constricted but the other given free rein.  
Uncovering these facts, 
the key contribution of the first part of the paper, 
involves
distinguishing and formalizing several
subtle properties, proving new hardness results, and devising an efficient
algorithm for repair of violated zipper constraints.

The paper then turns to more pragmatic ways the preceding
insights can be leveraged:
as is usual with \nphard problems,  
researchers have sought conditions that identify sub-classes for 
which minimizers can be found efficiently.
The sub-class we identify subsumes the previously known special cases.
The paper further improves understanding of previously known
sub-classes, detailing
their differences and also drawing out commonalities.
Finally, the paper gives an example filter which, prior to here, would not 
be recognized as possessing an efficient solution.

\bibliographystyle{plain}
\bibliography{mybib}

\end{document}